\documentclass[review,11pt]{elsarticle}

\usepackage{lineno,hyperref}
\modulolinenumbers[5]


\usepackage[margin=3cm]{geometry}
\usepackage{setspace}

\usepackage{graphicx}
\usepackage{times,amsmath,epsfig}
\graphicspath{{./}}
\usepackage{algorithmic}
\usepackage[ruled,lined,commentsnumbered]{algorithm2e}
\usepackage{tikz}
\usetikzlibrary{shapes,arrows}
\usepackage{url}
\usepackage{stfloats}
\usepackage{fixltx2e}
\usepackage[tight,footnotesize]{subfigure}
\usepackage{eqparbox}
\usepackage{mdwmath}
\usepackage{mdwtab}
\usepackage{array}
\usepackage{booktabs}
\usepackage{multirow}
\usepackage{subfigure}
\usepackage{rotating}
\usepackage{hyperref}
\usepackage{amssymb}
\usepackage{amsthm}
\usepackage{inputenc}
\newtheorem{theorem}{Theorem}
\newtheorem{definition}{Definition}

\let\OLDthebibliography\thebibliography
\renewcommand\thebibliography[1]{
  \OLDthebibliography{#1}
  \setlength{\parskip}{0pt}
  \setlength{\itemsep}{0pt plus 0.3ex}
}

\journal{Pattern Recognition}









\bibliographystyle{elsarticle-num}

\begin{document}

\begin{frontmatter}

\title{Adaptive Decision Forest: An Incremental Machine Learning Framework}

\author{Md Geaur Rahman\corref{cor1}}
\ead{grahman@csu.edu.au}
\author{Md Zahidul Islam\corref{cor2}}
\ead{zislam@csu.edu.au}
\cortext[cor1]{Corresponding author. Tel: +61 4 8010 0601, \url{http://gea.bau.edu.bd}}
\cortext[cor2]{URL: \url{http://csusap.csu.edu.au/~zislam/} (Md Zahidul Islam) }
\address{School of Computing and Mathematics, Charles Sturt University, Australia}

\begin{abstract}
In this study, we present an incremental machine learning framework called Adaptive Decision Forest (ADF), which produces a decision forest to classify new records. Based on our two novel theorems, we introduce a new splitting strategy called iSAT, which allows ADF to classify new records even if they are associated with previously unseen classes. ADF is capable of identifying and handling concept drift; it, however, does not forget previously gained knowledge. Moreover, ADF is capable of handling big data if the data can be divided into batches. We evaluate ADF on five publicly available natural data sets and one synthetic data set, and compare the performance of ADF against the performance of eight state-of-the-art techniques. Our experimental results, including statistical sign test and Nemenyi test analyses, indicate a clear superiority of the proposed framework over the state-of-the-art techniques.
\end{abstract}

\begin{keyword}
Incremental learning\sep Decision forest algorithm\sep Concept drift\sep Big data\sep Online learning
\end{keyword}

\end{frontmatter}


\section{Introduction}
\label{Introduction}

Nowadays information is considered the backbone of all organizations and is critical for their success. In real applications, big data often arrive as batches over time~\cite{gama2014survey}. Let us consider a scenario of an undergraduate admission system of a university where the admission authority can identify those applicants who have a high chance of completing the degree. The likelihood of success of an applicant can be determined by comparing the applicant's information (such as academic record, age, gender, nationality, etc.) with similar information from successful graduates. The yearly list of both successful and unsuccessful students can be considered as the batch training data and the yearly undergraduate admission applicants can be considered as the batch testing data, as shown in Fig.~\ref{fig:justification_test_batch}. Moreover, the batch testing data may follow not only the distribution of current batch training data but also the distributions of some previous batches of training data. In Fig.~\ref{fig:justification_test_batch} we can see that a group of students obtained their qualification (i.e. completed Year 12 degree) in the current year and another group of students obtained the qualification in previous years. Both groups of students are generally eligible to apply for the admission. The batch testing data of the year 2019 (i.e. Test batch 3) may have the students who obtained the admission qualification in the years 2017, 2018 and 2019.
\begin{figure}[ht!]
\centering
  \setlength{\belowcaptionskip}{0pt}
	\setlength{\abovecaptionskip}{0pt}	
	\includegraphics[width=0.90\linewidth]{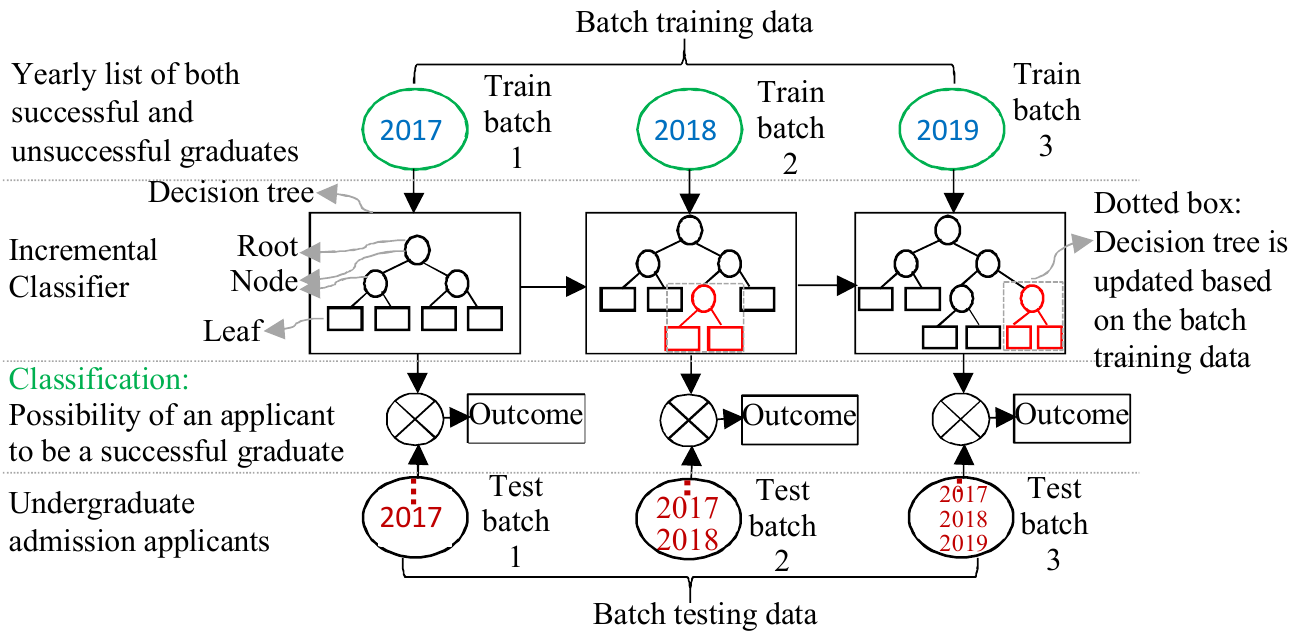}
	 \caption{An example of an undergraduate admission system of a university where both training and testing data arrive yearly as batches and the incremental classifier is updated (marked with the dotted box) each year based on the batch training data.}
	 \label{fig:justification_test_batch}
\end{figure}

Traditional machine learning algorithms such as Random Forest (RF)~\cite{breiman2001random} build recognition models based on training data, which are available at a time, to classify test data. Since all the training data in many real applications may not be available at a time, traditional machine learning algorithms are unable to build good models~\cite{hu2018novel}. Accurate recommendations and recognition are also an important challenge for the algorithms since they are not capable of adapting to dynamic changes in real applications~\cite{hu2018feature, hu2018novel}. Dynamic changes in data are also known as concept drifts. To adapt to the concept drifts, the models of traditional algorithms need to be retrained from scratch, which leads to large waste of time and memory~\cite{hu2018feature, hu2018novel}. Also, the data can be so big that it may not fit in a memory to be processed by traditional machine learning algorithms~\cite{hu2018novel}. 

Thus, to adapt concept drifts and to handle big data that often arrive as batches over time, it is crucial to have learning algorithms that are capable of learning incrementally and building a knowledge base over time for ensuring accurate classification of test batches that follow the distributions of current and previous batches~\cite{hu2018novel}. In Fig.~\ref{fig:justification_test_batch} we can see that an incremental classifier is built by the authority to assess the possibility of applicants to be successful graduates. Moreover, the classifier is updated incrementally based on the yearly batch training data.

A number of methods have been proposed recently in the literature for incremental learning~\cite{ristin2015incremental, hu2018novel, mensink2013distance, gomes2017adaptive, oza2005online}. An existing method called Nearest Class Mean Classifier (NCMC)~\cite{mensink2013distance} classifies a new record based on the centroids. It computes the centroid of the records labeled with a class and for a new unlabeled record it assigns the class value for the centroid of which is the closest to the record among all centroids. It updates the centroids based on the records that arrive over time. Although the method requires a low execution time for adapting the model with new records, it suffers with a low classification accuracy if the initial training data set is small~\cite{ristin2015incremental, ristin2014incremental}. 

The classification accuracy is increased significantly in a recent technique called CIRF~\cite{hu2018novel}, which is capable of updating an incremental classifier based on the current batch data only. CIRF first builds a decision forest by applying the RF~\cite{breiman2001random} on the initial batch data. It also identifies the boundary of the decision forest by considering the records as a box, which is also known as the Axis Aligned Minimum Bounding Box (AABB)~\cite{arvo1990transforming} (see Definition~\ref{defaabb}). A sample AABB of a set of data points is illustrated in Fig.~\ref{fig:aabb}a. Using the AABB, CIRF calculates two vectors called minimum and maximum vectors, where the $j$th element of the minimum vector is the minimum value of the $j$the attribute and the $j$th element of the maximum vector is the maximum value of the $j$the attribute.
\begin{figure}[ht!]
\centering
  \setlength{\belowcaptionskip}{0pt}
	\setlength{\abovecaptionskip}{0pt}	
	\includegraphics[width=0.95\linewidth]{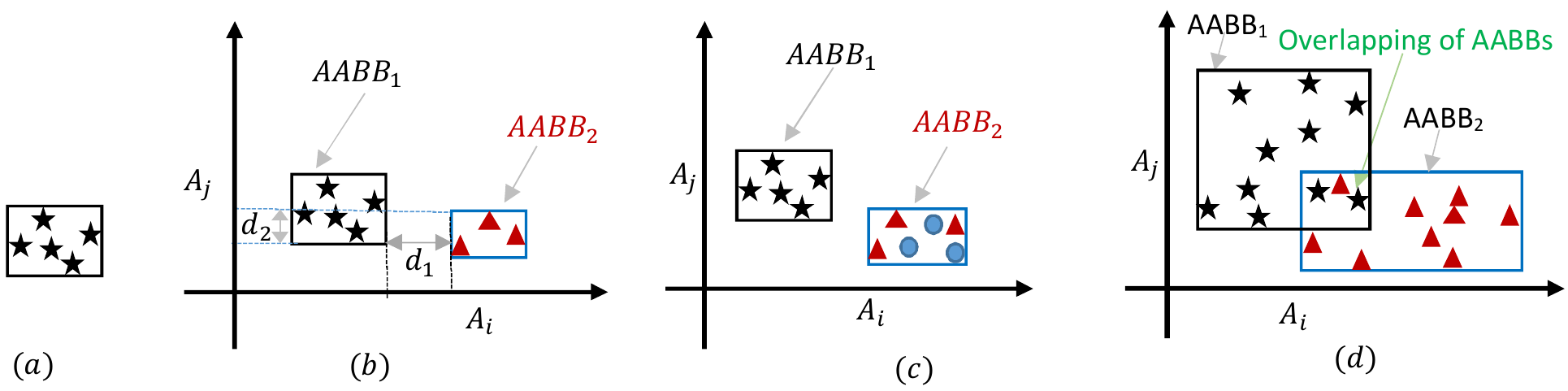}
	 \caption{(a) An Axis Aligned Minimum Bounding Box (AABB) of a set of data points represented by black stars. (b) Two AABBs are illustrated in a 2-dimensional form where attributes $A_i$ and $A_j$ are projected on x-axis and y-axis, respectively. (c) ${AABB}_2$ contains records having multiple class values. (d) There is an overlap between ${AABB}_1$ and ${AABB}_2$ due to the mixture of known (black stars) and unknown (red triangles) classes.}
	 \label{fig:aabb}
\end{figure}

For a new batch data having known class values\footnote{Known class values: the class values that appeared in the previous batches and hence previous forests were built from the batches having these class values.}, CIRF updates a decision tree of the decision forest just by assigning the records into leaves they belong to. However, for a new batch data having unknown class values\footnote{Unknown class values: the class values that do not appear in the previous batches and hence previous forests were built from the batches not having these class values.}, CIRF updates a decision tree by adding a root where the previous tree is considered as a child of the root and the new batch data are used to create another child of the root. CIRF uses an existing splitting strategy called Separating Axis Theorem (SAT)~\cite{gottschalk1996separating} (see Theorem~\ref{sat_theorem}) to determine the split attribute as follows. Let ${AABB}_1$ and ${AABB}_2$ be two boxes that are represented by the records of previous batches and the records of current batch, respectively (as shown in Fig.~\ref{fig:aabb}b). CIRF calculates the minimum and maximum vectors for both ${AABB}_1$ and ${AABB}_2$. Using the vectors, the method then calculates the distance between ${AABB}_1$ and ${AABB}_2$ for all numerical attributes. For example, in Fig.~\ref{fig:aabb}b $d_1$ is the distance between AABB1 and AABB2 for the attribute $A_i$ which is projected on the x-axis. The attribute which has the maximum distance is chosen as the split attribute (see Eq.~\eqref{sat_splitAttr_eq} for details). It is reported that CIRF outperforms state-of-the-art incremental learning algorithms in terms of both classification accuracy and execution time~\cite{hu2018novel}.  

We argue that CIRF has three main shortcomings. First, CIRF considers only the numerical attributes while calculating the minimum and maximum vectors and thereby, categorical attributes are never considered as the split attributes. Second, for a new batch having multiple unknown class values (such as ${AABB}_2$ Fig.~\ref{fig:aabb}c) CIRF adapts the decision tree by creating new a leaf under the root of the tree. Due to the heterogeneity in the new leaf, CIRF leads to a low classification accuracy. Third, if a new batch contains records with a mix of known and unknown classes (such as ${AABB}_2$ Fig.~\ref{fig:aabb}d), there will be an overlap between the AABBs (as shown in Fig~\ref{fig:aabb}d) resulting is a high heterogeneity in the newly created leaf. Thus, CIRF suffers with a low classification accuracy (see Fig.~\ref{fig:justify_sat_entropy_isat}).     

The existing methods therefore have room for further improvement. We propose a novel incremental learning framework called Adaptive Decision Forest (ADF) in this paper. In the ADF framework, we build decision forests by using one of the three techniques: RF~\cite{breiman2001random}, HT~\cite{domingos2000mining} and SysFor~\cite{islam2011knowledge}, and thereby obtain three variants called ADF-R, ADF-H and ADF-S, respectively. 
The ADF framework is adapted for the training batch data that arrive over time. During adaptation, ADF makes use of our proposed novel splitting strategy called improved Separating Axis Theorem (iSAT) to find the best split attribute. In iSAT, we use both SAT and entropy-based strategies. For a new batch having a mix of known and unknown classes, ADF can find two sub-AABBs (as shown in Fig~\ref{fig:aabb}d and Fig.~\ref{fig:sub_aabb_overlap_proof}) based on our proposed theorems (see Theorem~\ref{subbox_nonoverlap_theorem} and Theorem~\ref{subbox_overlap_theorem}) where a sub-AABB contains records with a minimum number of a mix of known and unknown classes and the other sub-AABB has records with new class values. For the first sub-AABB, we use the entropy based splitting strategy and for the second sub-AABB, we consider SAT based splitting strategy. 

The main contributions of this paper can be summarized as follows. 1) We present a novel splitting strategy called improved separating axis theorem (iSAT), which makes use of the separating axis theorem (SAT)~\cite{gottschalk1996separating} and our proposed two theorems (Theorem~\ref{subbox_nonoverlap_theorem} and Theorem~\ref{subbox_overlap_theorem}) to find the best split attribute which can be either numerical or categorical. 2) We present a framework called Adaptive Decision Forest (ADF) which can identify and handle concept drifts and preserve previously acquired knowledge by introducing a set of forests (see Section~\ref{justify_pf_af_tf}). 3) ADF can handle all scenarios (as shown in Fig.~\ref{fig:framework}) that may occur for the new batches that arrive over time. 4) ADF is also applicable to big data applications where the data can be divided into batches (see section~\ref{Experimental_bigdata}).

We evaluate ADF variants on five real data sets~\cite{Frank+Asuncion:2010} and one synthetic data set by comparing its performance with the performance of eight high quality existing techniques including HT~\cite{domingos2000mining}, CIRF~\cite{hu2018novel}, and ARF~\cite{gomes2017adaptive}. We also compare the performance of ADF variants with the performance of two non-incremental existing algorithms such as SysFor~\cite{islam2011knowledge} and RF~\cite{breiman2001random}. Our experimental results indicate that ADF variants achieve a higher classification accuracy than the existing methods.

The rest of the paper is organized as follows. Section~\ref{problemAndRelatedWork} presents the problem formulation and assumptions and a background study on incremental learning methods. Our proposed incremental framework is presented in Section~\ref{Our_Framework}. Section~\ref{Experimental_Result} presents empirical evaluations, and Section~\ref{Conclusion} gives concluding remarks.

\section{Problem Formulation and Related Work}
\label{problemAndRelatedWork}

\subsection{Problem Formulation and Assumptions}
\label{problem_definition}

Let $D$ be an input data set which we consider as a two dimensional table, where rows represent records $X= \{X_1, X_2, \ldots X_n\}$ and columns represent attributes $A=\{A_1, A_2, \ldots A_m\}$. There are $m$ attributes i.e. $|A|= m$ and $n$ records i.e. $|D| = |X| = n$. $X_{ij}$ represents the $j$th attribute value of the $i$th record. An attribute $A_j\in A$ can be categorical or numerical. The domain of a categorical attribute $A_j$ can be $\{a_j^1, a_j^2, \ldots a_j^k\}$ meaning the domain size of $A_j$ is $|A_j| = k$. Similarly, the domain of a numerical attribute $A_p$ can be $[a_p^{low}, a_p^{up}]$, where $a_p^{low}$ and $a_p^{up}$ are the lower and upper limits of the domain, respectively. Let $Y\in A$ be the class attribute having a set of class values $C= \{C_1, C_2, \ldots C_k\}$. A classifier $T$ is a function $T\leftarrow f(X):D\rightarrow Y$ that maps records $X\in D$ to the class values. The notations that we use in this paper are presented in Table~\ref{tab:notations}.

\begin{table}[ht!]
	\footnotesize
	\centering
	\caption{Notations and their meanings.}
	\renewcommand\tabcolsep{3pt} 
	\begin{tabular}{ll|ll}
	\toprule
		Notations&Meaning&Notations&Meaning\\
		\midrule
		  $D$&Data set& $P(X)$& Probability of $X$\\
		  $D_{train}$& Training data set& $P(Y|X)$& Probability of $Y$ given $X$\\
			$D_{test}$& Test data set& $P(Y|\neg X)$& Probability of $Y$ with changes $X$\\
			$D^{t^i}_B$& Batch data set at time $t^i$& $P(X,Y)$& Probability distribution of $X$ and $Y$\\	
			${D^{t^i}_{B_{train}}}$& Training batch data set at time $t^i$& $\Psi$ & Set of parameters\\
			$D^{t^i}_{B_{test}}$& Test batch data set at time $t^i$& $\theta$& Repairable threshold\\
			$X$ & Set of records in $D$& $\lambda$& Concept drift threshold\\
			$X_i$ & $i$-th record in $D$& $\gamma$& Reserve window threshold\\
			$|D| = |X| = n$& Number of records in $D$& $T$& Classifier/decision forest\\
			$|{D^{t^i}_{B_{train}}}| = n_B$& Number of records in ${D^{t^i}_{B_{train}}}$& $l$& Number of leaves of a tree\\
			$A$ & Set of attributes in $D$& $T^A$ & Active Forest (AF) \\
			$|A|=m$ & Number of attributes in $D$& $T^P$ & Permanent Forest (PF)\\
			$A_j$ & $j$-th attribute in $D$& $T^T$ & Temporarily Forest (TF)\\
			$|A_j|$ & Domain size of $j$-th attribute in $D$& $L^A$ & Leaves statistics of $T^A$\\
			$a_j^{low}$ & Lower limit of $j$-th numerical attribute& $L^P$ & Leaves statistics of $T^P$\\
			$a_j^{up}$ & Upper limit of $j$-th numerical attribute& $L^T$ & Leaves statistics of $T^T$\\
			$Y$& Class attribute in $D$& $D^W$ & Reserve window data set \\
			$C$& Set of class values in $D$& $P^A$ & Perturbed leaves of $T^A$\\
			$C_k$& $k$-th class value in $C$& $P^U$ & Perturbed leaves of $T^P$\\
			$M$ & Number of trees & $P^R$ & Perturbed leaves of $T^T$\\
			$\epsilon$ & Error tolerance threshold& & \\
	\bottomrule
	\end{tabular}
	\label{tab:notations}
\end{table}

Let $D_{train}=\{X_i,Y_i\}, X_i\in X, Y_i\in C$ and $D_{test}=\{X^{\prime}_i\}, X^{\prime}_i\in X^{\prime} \: {X\cap}X^{\prime}\rightarrow \emptyset$ be the training data set and test data set, respectively, where the test data set does not have the class attribute. Traditional supervised learning algorithms assume that the whole training data set $D_{train}$ is available during training, and both $D_{train}$ and $D_{test}$ follow the same probability distribution $P(X,Y)=P(Y|X).P(X)$. The goal of the supervised learning algorithms is to build a classifier $T$, from $D_{train}$, which is capable of classifying the records of the test data set $D_{test}$ with a high classification accuracy. 

Sometimes the scenario can be more complex where the whole input data set $D_{train}$ is not available during building the classifier $T$. Sometimes although the whole data is available, it may not fit in a single memory due to its size being too big. Many incremental learning algorithms assume that data arrive as batches over time~\cite{hu2018novel, ristin2015incremental}. Moreover, the distribution $P(X,Y)=P(Y|X).P(X)$ of a training data set may change over time and may cause concept drift i.e. $P(Y|\neg X)$ which may have a serious impact on classification accuracy. However, the test data sets may follow both current and previous distributions. Let  $D^{t^0}_{B_{train}}$, $D^{t^1}_{B_{train}}$ and $D^{t^2}_{B_{train}}$ be three batches of training data with the class attribute arrive at time $t^0$, $t^1$, and $t^2$, respectively and $D^{t^0}_{B_{test}}$, $D^{t^1}_{B_{test}}$ and $D^{t^2}_{B_{test}}$ be three corresponding test batches at time $t^0$, $t^1$, and $t^2$, respectively. The test batch $D^{t^2}_{B_{test}}$ contains data that may follow the distributions of $D^{t^2}_{B_{train}}$, $D^{t^1}_{B_{train}}$ and $D^{t^0}_{B_{train}}$ (as discussed in Section~\ref{Introduction} and demonstrated in Fig~\ref{fig:justification_test_batch}). Thus, it is essential to adapt the classifier over time to achieve a high classification accuracy. Our proposed incremental framework is formulated under the following assumptions: 1) Data arrive as batches over time. Let $D^{t^i}_{B_{train}}$ and $D^{t^j}_{B_{train}}$ be the batches of data arriving at time $t^i$ and $t^j$, respectively. 2) The set of class values $C$ may change over time. Let $C^{t^i}$ and $C^{t^j}$ be the sets of class values of $t^i$ and $t^j$, respectively, where $C^{t^i}$ and $C^{t^j}$ may differ. 3) The class attribute ${Y}^i$ in the training batch data set ${D^{t^i}_{B_{train}}}$ at $t^i$ is available, thus, it is possible to adapt the classifier $T^i$ over time. 4) $D^{t^i}_{B_{test}}$ follows the distributions of training batch data sets arriving at times ${\{t^j\}}^{\gamma}_{j=i}$.

Under such assumptions, the goal of our proposed framework is to define an incremental classifier $T^i\leftarrow f^i(X^i|T^{i-1},D^{t^i}_{B_{train}}, D^{t^i}_{B_{test}}, \Psi)$ based on a decision forest which will achieve an accurate classification for the test batch data set $D^{t^i}_{B_{test}}$ at time $t^i$ by adapting the classifier $T^{i-1}$ at time $t^{i-1}$ with the training batch data set $D^{t^i}_{B_{train}}$ and the set of parameters $\Psi$.

\subsection{Related Work}
\label{Related_Work}

A number of methods have been proposed for incremental learning~\cite{ristin2015incremental, hu2018novel, mensink2013distance}. Incremental learning methods process records one by one or batch by batch that arrive over a period of time~\cite{gama2014survey}. An important challenge of the methods is to handle dynamic changes that may occur in real applications. Generally dynamic changes can happen in two ways: 1) changes in data distribution $\&$ feature dimensions, and 2) changes in class values~\cite{hu2018novel}. 

For handling the changes in data distribution $\&$ feature dimensions, a number of feature incremental learning algorithms~\cite{hu2018feature, liu2008incremental} have been proposed. An existing method called FLSSVM~\cite{liu2008incremental} that makes use of a least square support vector machine algorithm to adapt a classifier with the new batch data set. For the initial batch data set, the method builds a classifier using an existing least square support vector machine algorithm~\cite{suykens1999least}. With the previously learned structural parameters, FLSSVM then adapts the data of the new attributes. Although FLSSVM requires comparatively low training time and memory, it is difficult to find a suitable kernel function for SVM for achieving a good accuracy~\cite{suykens1999least}. 

For handling the changes in class values a number of class incremental learning algorithms~\cite{ristin2015incremental, hu2018novel, he2011incremental} have been proposed. An existing class incremental learning method called NCMC~\cite{mensink2013distance} adapts the new batch records $X_j\in D^{t^j}_B$ based on the mean vectors $V_{C_k} \forall k$ of class values $C$. For each class value $C_k\in C$ of the initial data set, NCMC first calculates the mean vector $V_{C_k}$ of the records $X_{C_k}$ belonging to the class $C_k\in C$ as follows.
\begin{align}
  V_{C_k}={\frac{1}{|X_{C_k}|}}{\sum_{X_i\in X_{C_k}} {X_i}}
\label{ncmc_mean}	
\end{align}

For a new record $X_j\in D^{t^j}_B$ the method then calculates distances separately between the new record and the means of the class values. The Euclidean distance $d^j_k$ between $X_j\in D^{t^j}_B$ and mean vector $V_{C_k}; C_k\in C$ is calculated as follows.

\begin{align}
  d^j_k=\left\|X_j-V_{C_k}\right\|_2
\label{ncmc_distance}	
\end{align}

The record $X_j\in D^{t^j}_B$ is classified by assigning the class value $C_k\in C$ which has the minimum distance. The method then recalculates $V_{C_k} \forall k$ based on the records that arrive over time. Although the method requires a low execution time for adapting the model with new records, it suffers with a low classification accuracy if the initial training data set does not have enough records~\cite{ristin2015incremental, ristin2014incremental}.

This problem is addressed in an existing method called NCMF~\cite{ristin2014incremental} which we discussed in the Introduction section. NCMF first builds a decision forest $T$ by applying an existing decision forest algorithm called Random Forest (RF)~\cite{breiman2001random} on the initial batch data $D^{t^i}_B$. The decision forest $T$ is then updated for the batch data sets that arrive over time. For a batch data set $D^{t^j}_B$, NCMF updates the decision forest $T$ as follows. NCMF first assigns the records $X_j\in D^{t^j}_B$ in the leaves of a tree $t\in T$ the records belong to. If a leaf $l$ contains records having multiple class values, NCMF calculates the mean vectors $V_{|C|}$ by using the Eq.~\ref{ncmc_mean}. NCMF then translates the mean vectors into two class values $e\in\{+ve,-ve\}$ by randomly assigning the mean vectors either into Positive or Negative. The splitting function $f(X_j)$ for assigning a record $X_j\in D^{t^j}_B$ is defined as follows~\cite{ristin2014incremental}. 
\begin{align}
  f(X_j)= e_{C_k^*(X_j)} ~~~~~~~~  where~~~ C_k^*(X_j)=  \underset{C_K\in C}{argmin}    \left\|X_j-V_{C_k}\right\|_2
\label{ncmf_splitfunction}	
\end{align}
Using the splitting function $f(X_j)$, NCMF splits the leaf $l$ into two sets where one set is considered as the left child and the other set is considered as the right child. It continues this splitting process if the size of any child is greater than a user-defined threshold. It is reported that NCMF achieves a high accuracy over some existing methods including NCMC and RF. However, during the splitting process the method requires all previous data, resulting is a high computational time~\cite{ristin2015incremental}.

The computational time is reduced in a recent technique called CIRF~\cite{hu2018novel} that first builds a decision forest $T$ by applying the RF on the initial batch data $D^{t^i}_B$. The decision forest $T$ is then updated by using only the records of the current batch data set $D^{t^j}_B$. The procedure of updating the decision forest $T$ is discussed in detail in the Introduction section. Although CIRF outperforms state-of-the-art incremental learning algorithms in terms of both classification accuracy and execution time, the method has three main limitations (that are also mentioned in the Introduction section) resulting in a low classification accuracy. Therefore, the existing methods have room for further improvement.

\section{Our Proposed Incremental Learning Framework: Adaptive Decision Forest (ADF)}
\label{Our_Framework}

In this section, we present our proposed incremental learning framework called  \textbf{A}daptive \textbf{D}ecision \textbf{F}orest (\textbf{ADF}), which produces a decision forest to classify new data. ADF takes as input the data that arrive as batches over time. We consider a batch data set $D^{t^i}_B$ that arrives at time $t^i$. We also consider that the batch data $D^{t^i}_B$ has a set of class values $C^{t^i}=\{C^{t^i}_1, C^{t^i}_2, \ldots C^{t^i}_k\}$ where a class value $C^{t^i}_v$ is associated with a record $X_i\in D^{t^i}_B$. Let $T^i$ be a decision tree which is built on $D^{t^i}_B$. Thus, the class values $C^{t^i}\in D^{t^i}_B$ are considered to be known to $T^i$.     

At time $t^j$, ADF receives another batch data set $D^{t^j}_B$ which we assume has the same attributes $A=\{A_1, A_2, \ldots A_m\}$. However, $D^{t^j}_B$ can have a different set of class values $C^{t^j}=\{C^{t^j}_1, C^{t^j}_2, \ldots C^{t^j}_k\}$. The class values $C^{t^j}$ are considered to be new (or unknown) to the decision tree, $T^i$. The class values $C^{t^j}$ of a new batch $D^{t^j}_B$ can be categorized into five scenarios, namely single known class (SKC), multiple known class (MKC), single unknown class (SUC), multiple unknown class (MUC) and a mixture of known and unknown classes (MKUC). The scenarios are defined in Eq.~\ref{class_scenarios}.
\begin{align}
  {Scenario(C^{t^i},C^{t^j})}_{i\neq j} = \left\{
  \begin{array}{l l}
    \text{SKC} & \quad \text{if $|C^{t^j}|=1$ \& $C^{t^j}_k\in C^{t^i}$}\\
    \text{MKC} & \quad \text{if $|C^{t^j}|>1$ \& $C^{t^j}_k\in C^{t^i}$;$\forall {k}$}\\
		\text{SUC} & \quad \text{if $|C^{t^j}|=1$ \& $C^{t^j}_k\notin C^{t^i}$}\\
    \text{MKC} & \quad \text{if $|C^{t^j}|>1$ \& $C^{t^j}_k\notin C^{t^i}$;$\forall {k}$}\\
		\text{MKUC} & \quad \text{if $|C^{t^j}|>1$ \& $C^{t^j}_k\in C^{t^i}$;$\forall {k}$ \& $C^{t^j}_v\notin C^{t^i}$;$\forall {v}$}\\
  \end{array} \right.  
	;\forall {i,j}
\label{class_scenarios}
\end{align}

The categorization of class values of a new batch data is shown in Fig.~\ref{fig:framework}. To the best of our knowledge, traditional class incremental learning methods including CIRF~\cite{hu2018novel} usually focus on the SUC scenario and do not consider other scenarios. ADF is capable of handling all five scenarios by integrating our novel $iSAT$ splitting strategy with an existing decision forest algorithm such as RF~\cite{breiman2001random}. 

\begin{figure}[ht!]
\centering
  \setlength{\belowcaptionskip}{0pt}
	\setlength{\abovecaptionskip}{0pt}	
	\includegraphics[width=0.90\linewidth]{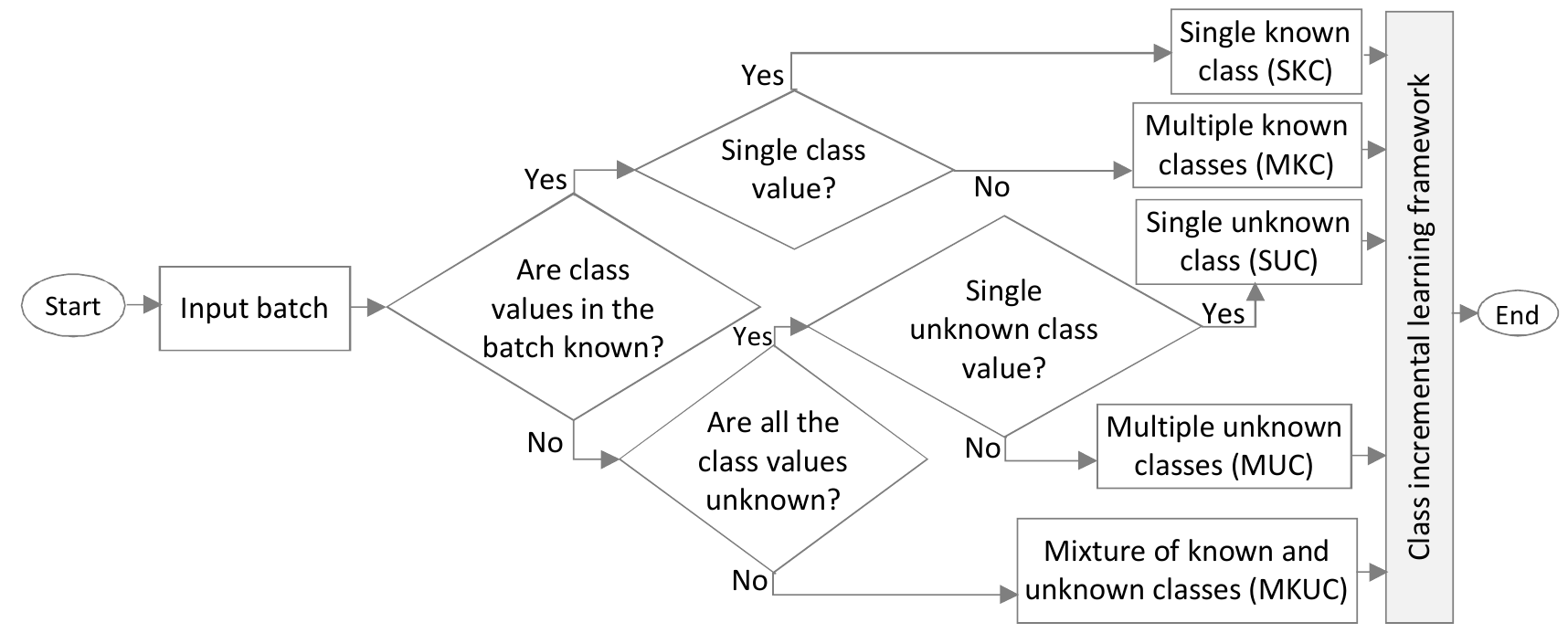}
	 \caption{Categorization of class values of a new batch data.}
	 \label{fig:framework}
\end{figure}

\subsection{Basic Concept of ADF}
\label{basicconcept}

 Incremental decision forest algorithms (such as CIRF~\cite{hu2018novel} and ARF~\cite{gomes2017adaptive}) can adapt the decision trees continuously and perform better than the traditional (non-incremental) decision forest algorithms (such as RF~\cite{breiman2001random} and SysFor~\cite{islam2011knowledge}) in terms of classification accuracy and training time~\cite{hu2018novel}. We observe the following three issues that play a vital role in achieving high accuracy by an incremental decision forest algorithm which is also susceptible to handling all five scenarios as shown in Eq.~\ref{class_scenarios} and Fig.~\ref{fig:framework}.

The first issue is to identify the best split for a new batch data $D^{t^i}_B$. The commonly used splitting strategies are entropy~\cite{quinlan1986induction}, Gini index~\cite{breiman1984classification} and axis aligned~\cite{breiman2001random}. An existing algorithm called CIRF~\cite{hu2018novel} makes use of an existing splitting strategy called Separating Axis Theorem (SAT)~\cite{gottschalk1996separating} to find the best split. However, we argue that CIRF does not perform well on $D^{t^i}_B$ if it contains records with a mix of known and unknown classes (MKUC). To find the best split for a batch that follows the MKUC scenario, we propose a novel splitting strategy called improved separating axis theorem (iSAT) (see section~\ref{isat}) by which it is expected to achieve a high classification accuracy.

We illustrate the argument by considering three toy batch data sets as shown in Fig.~\ref{fig:justify_sat_entropy_isat_tree}a, Fig.~\ref{fig:justify_sat_entropy_isat_tree}b and Fig.~\ref{fig:justify_sat_entropy_isat_tree}c. Each batch data set contains 10 records and three attributes, namely ``Area of a House in square meter (Area)'', ``number of bedrooms (Beds)'' and ``House rent category (Rent)''. Figure~\ref{fig:justify_sat_entropy_isat_tree}d shows a decision tree which is built from the batch data $B0$ (as shown in Fig.~\ref{fig:justify_sat_entropy_isat_tree}a). The decision tree is repaired based on the SAT splitting strategy for the batch data $B1$ (see Fig.~\ref{fig:justify_sat_entropy_isat_tree}b) which follows the MKUC scenario. The repaired decision tree is shown in Fig.~\ref{fig:justify_sat_entropy_isat_tree}e. Figure~\ref{fig:justify_sat_entropy_isat} shows that the classification accuracy of the repaired decision tree (as shown in Fig.~\ref{fig:justify_sat_entropy_isat_tree}b) is lower than the classification accuracy of the decision trees as shown in Fig.~\ref{fig:justify_sat_entropy_isat_tree}h and Fig.~\ref{fig:justify_sat_entropy_isat_tree}k that are repaired based on the splitting strategies, namely entropy and iSAT (see section~\ref{isat}), respectfully. Besides, the decision tree with the iSAT splitting strategy achieves high classification accuracy for all batches.

\begin{figure}[ht!]
\centering
  \setlength{\belowcaptionskip}{0pt}
	\setlength{\abovecaptionskip}{0pt}	
	\includegraphics[width=0.90\linewidth]{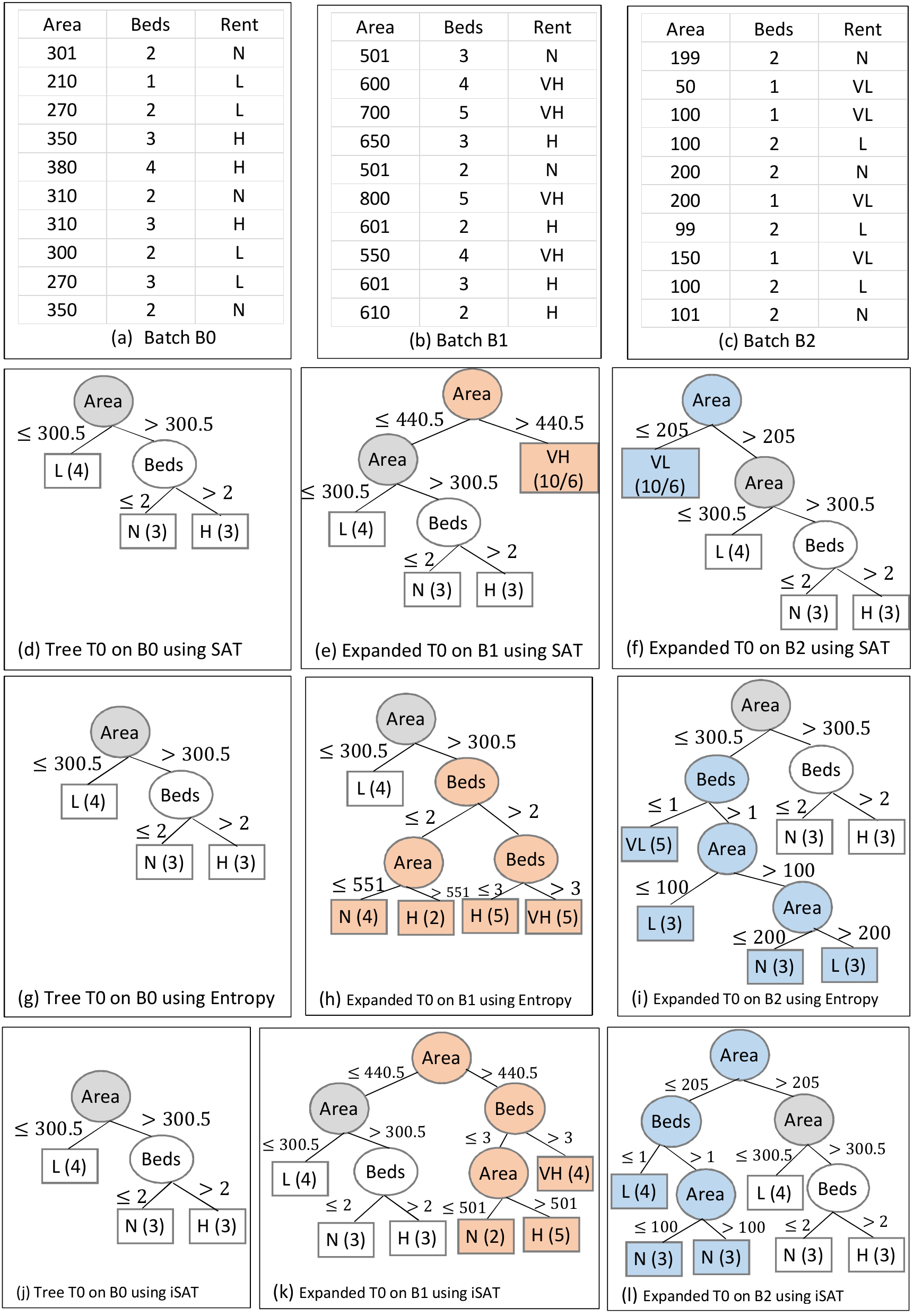}
	 \caption{Construction of trees on toy batches in terms of SAT, Entropy and iSAT splitting strategies.}
	 \label{fig:justify_sat_entropy_isat_tree}
\end{figure}

\begin{figure}[ht!]
\centering
  \setlength{\belowcaptionskip}{0pt}
	\setlength{\abovecaptionskip}{0pt}	
	\includegraphics[width=0.90\linewidth]{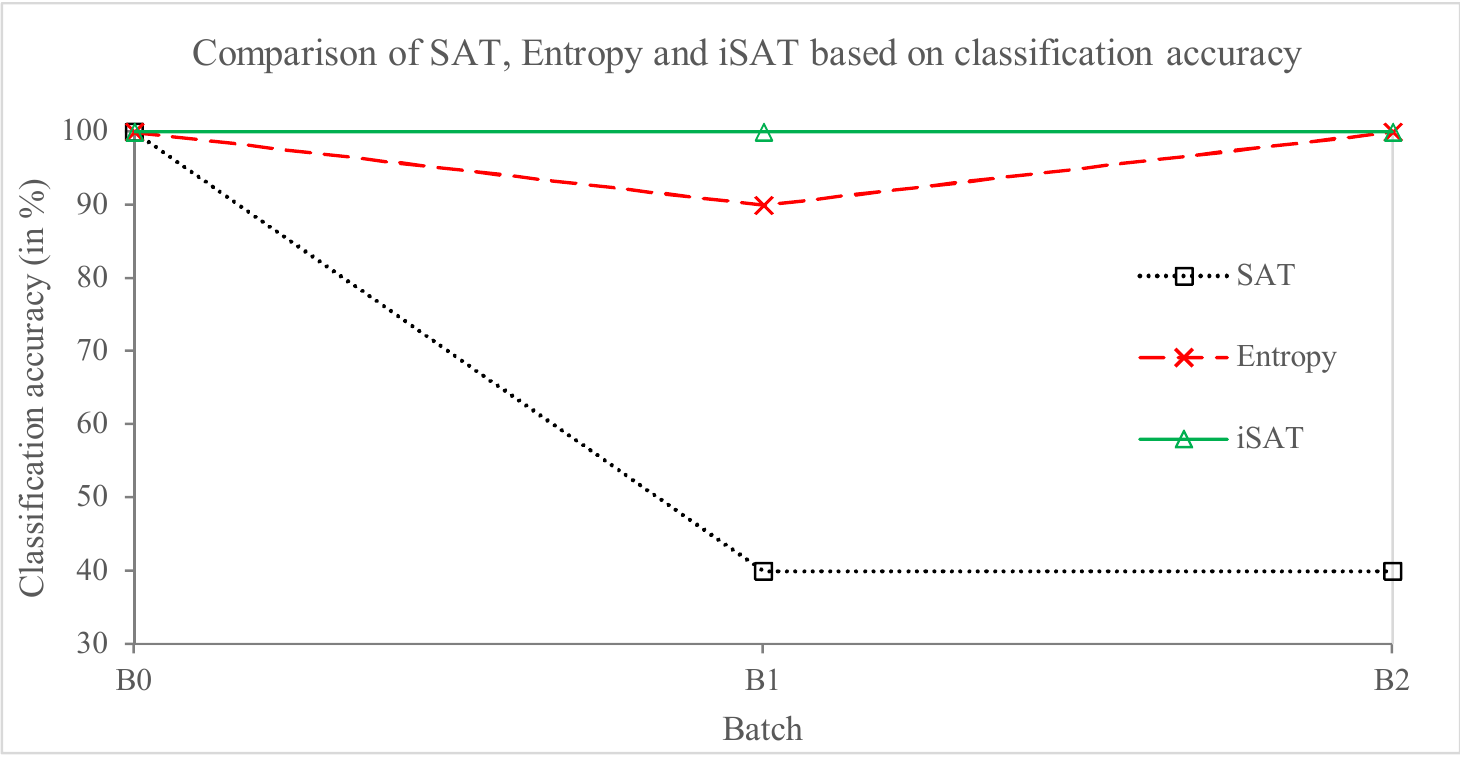}
	 \caption{Comparison of classification accuracies on toy batches in terms of SAT, Entropy and iSAT splitting strategies.}
	 \label{fig:justify_sat_entropy_isat}
\end{figure}

The second issue is to decide whether the decision trees are required to repair or not. Moreover, if the trees are required to repair then an important task is to decide how many of the trees should be repaired by considering the cost of modification. To address the issue, we introduce a repairable strategy in section~\ref{dtrs} to find an optimal repairable threshold for achieving a high accuracy with a low cost.   

The third issue is to handle concept drifts and preserving historical information that are gained from the previous batches. In dynamic data structure, data distribution may change over time and this situation is known as concept drift, which may generally be categorized in two types, namely Temporary Concept Drift (TCD)\footnote{TCD: If the concept drift occurs in only one or a few consecutive batches.} and Sustainable Concept Drift (SCD)\footnote{SCD: If the concept drift sustains over a period of time.}. To handle TCD, if a decision forest is rebuilt by considering only the current batch $D^{t^i}_B$ then it is most likely that the historical information of the forest will be lost. We argue that the decision forest is unable to classify the records if they are drawn from the distribution of the previous batches and may lead to a low classification accuracy. We also argue that the classification accuracy may be increased if the decision forest is rebuilt only for the SCD instead of the TCD. A strategy for identifying a SCD is presented in Section~\ref{identify_concept_drift}. Moreover, to achieve a high accuracy, a number of consecutive batches called window (discussed in Section~\ref{justify_window}) instead of just the current batch can be used while rebuilding the decision forest. 

We also argue that the historical information may be preserved by building a set of forests called Permanent Forest ($PF$), Active Forest ($AF$), and Temporary Forest ($TF$) which are discussed in Section~\ref{justify_pf_af_tf}. ADF can update $PF$, $AF$ and $TF$ in parallel and achieves a high accuracy by keeping the historical information.      

To address the issues, in ADF we consider a number of strategies that are summarized as follows: improved separating axis theorem (iSAT) based splitting strategy (in Section~\ref{isat}), decision trees repairable strategy (in Section~\ref{dtrs}), the use of a set of decision forests (in Section~\ref{justify_pf_af_tf}), identification of SCD strategy (in Section~\ref{identify_concept_drift}), and the use of a window of batches strategy (in Section~\ref{justify_window}). We now discuss each of these strategies as follows.

\subsubsection{Improved Separating Axis Theorem (iSAT) Splitting Strategy}
\label{isat}

We propose a novel splitting strategy called \textbf{iSAT} to find a splitting attribute and value for a node of a decision tree. Our proposed splitting strategy $iSAT$ handles the problems of an existing method CIRF~\cite{hu2018novel} which makes use of an existing algorithm called Axis Aligned Minimum Bounding Box (AABB)~\cite{arvo1990transforming} and an existing theorem called Separating Axis Theorem (SAT)~\cite{gottschalk1996separating} to find the best split attribute and value. We first briefly introduce AABB and SAT before $iSAT$.
\begin{definition}
\label{defaabb}
In geometry, the smallest surrounding box of a set of data points is known as the minimum bounding box~\cite{arvo1990transforming} where all the data points are surrounded by the box.     
\end{definition}

A sample AABB of a set of data points is illustrated in Fig.~\ref{fig:aabb}a.
 If two minimum bounding boxes are not intersected then it can be concluded that no overlap exists between the corresponding sets of data points. For example, Fig~\ref{fig:aabb}b shows two sets of data points: black stars and red triangles. The smallest surrounding boxes for the black stars and red triangles are marked with ${AABB}_1$ and ${AABB}_2$ where the boxes are overlapped at axis $A_j$ but not at $A_i$. Thus, the black stars and red triangles are separated by ${AABB}_1$ and ${AABB}_2$ at axis $A_i$.

The strategy of separating two sets of data points can be used for the incremental expansion of a decision tree by inserting a parent of a node. For example, Fig.~\ref{fig:justify_sat_entropy_isat_tree} demonstrates that the decision tree shown in Fig.~\ref{fig:justify_sat_entropy_isat_tree}d is expanded by adding a parent node ``Area'' at the root node and the resulting tree is shown in Fig.~\ref{fig:justify_sat_entropy_isat_tree}e. Using AABBs, a decision tree can be represented as a hierarchical form of nested bounding boxes~\cite{hu2018novel}. For $D^{t^i}_B$ with new class values, it is crucial to identify the best split attribute and value. The issue can be addressed by using the SAT~\cite{gottschalk1996separating} as follows.

\begin{theorem}

Separating Axis Theorem (SAT)~\cite{gottschalk1996separating, hu2018novel, boyd2004convex}: If two nonempty convex objects are disjoint then there exists an axis on which the projection of the convex objects will not overlap.
\label{sat_theorem}
\end{theorem}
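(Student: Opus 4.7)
The plan is to derive SAT from the hyperplane separation theorem (a finite-dimensional consequence of Hahn--Banach) for convex sets. The key reduction is to pass from a separation problem involving two sets to a separation problem involving the origin and a single convex set. First, I would form the Minkowski difference $S = A - B = \{a - b : a \in A,\, b \in B\}$. Convexity of $A$ and $B$ immediately gives convexity of $S$, and the disjointness hypothesis $A \cap B = \emptyset$ is equivalent to $\mathbf{0} \notin S$. Thus the geometric content of the theorem is condensed into separating a single point from a convex set that misses it.

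Next, I would invoke the hyperplane separation theorem on $\{\mathbf{0}\}$ and $S$, obtaining a nonzero vector $v$ and a scalar $c \ge 0$ such that $\langle v, x \rangle \ge c$ for every $x \in S$. Unwinding the definition of $S$ yields
\begin{equation*}
\langle v, a \rangle \;\ge\; \langle v, b \rangle + c \qquad \text{for all } a \in A,\; b \in B.
\end{equation*}
Projecting orthogonally onto the one-dimensional axis spanned by $v$ sends each $a \in A$ to $\langle v, a \rangle / \|v\|$ and each $b \in B$ to $\langle v, b \rangle / \|v\|$, so the image sets $\pi_v(A)$ and $\pi_v(B)$ inside $\mathbb{R}$ satisfy $\inf \pi_v(A) \ge \sup \pi_v(B) + c/\|v\|$. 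In particular they do not overlap, and the axis direction $v$ is the desired separating axis.

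The main obstacle is the familiar strict-versus-non-strict separation subtlety: the classical theorem for two arbitrary disjoint convex sets only guarantees a \emph{supporting} hyperplane, so in degenerate cases the projection intervals could touch at a single boundary point. To get clean non-overlap I would assume (as is automatic for AABBs in the paper's setting, since they are compact) that at least one of $A$, $B$ is compact, which upgrades the conclusion to strict separation with $c > 0$. A secondary subtlety is choosing the right variant of the separation theorem: in the AABB application the sets are polytopes, so the stronger polyhedral separation (separation by a hyperplane whose normal can be taken among finitely many candidate directions, namely the face normals) is actually available and is what justifies the finite test over coordinate axes used in \texttt{iSAT}. I would flag this polyhedral refinement as the bridge between the general statement above and its algorithmic use in the rest of the paper.
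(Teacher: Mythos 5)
Your proof is correct, but it takes a genuinely different route from the paper's. The paper follows the Boyd--Vandenberghe construction: it takes a closest pair of points $q\in Q$, $s\in S$ realizing $\operatorname{mindist}(Q,S)$, sets $z=s-q$ and $b=\bigl(\|s\|_2^2-\|q\|_2^2\bigr)/2$, and shows by a derivative/perturbation argument that the affine function $f(x)=z^Tx-b$ has the required signs on the two sets, so that $z$ itself is the separating axis. You instead reduce to a one-set problem via the Minkowski difference $S=A-B$, observe that disjointness is exactly $\mathbf{0}\notin S$, and invoke the hyperplane separation theorem for a point and a convex set as a black box, then read off the axis from the separating normal. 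The paper's route is self-contained and produces the axis explicitly (useful for the algorithmic content of iSAT), but it silently assumes that the infimum distance is attained and is strictly positive, which fails for general disjoint convex sets (e.g.\ two closed convex sets that are asymptotically tangent); your version makes this gap explicit as the strict-versus-non-strict separation issue and correctly notes that compactness of the AABBs resolves it. Your closing remark on polyhedral separation --- that for axis-aligned boxes the candidate normals can be restricted to the coordinate directions --- is not needed for the theorem as stated, but it is precisely the refinement that licenses the finite per-attribute test in Eq.~\eqref{sat_splitAttr_eq}, and it would be a worthwhile addition to the paper's exposition.
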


\begin{proof}

Let $R$ be a set of real numbers and $R^n$ be a set of real n-vectors. Also let $z\in R^n$ be a normal vector and $b\in R$ be a real number. We now consider two nonempty convex objects Q and S where $Q\cap S=\emptyset$. Then there exist $z\neq 0$ and $b$ such that $x|z^Tx\leq b;\forall x\in Q$ and $x|z^Tx\geq b;\forall x\in S$. That is the affine function, $f(x)=z^Tx-b$ is nonpositive on Q and nonnegative on S. The hyperplane, $h$, $\{x|z^Tx=b\}$ is considered as the separating hyperplane for the two objects Q and S. In other words, the hyperplane, $h$, separates the convex objects Q and S which is illustrated in Fig.~\ref{fig:proof_sat_a}.

\begin{figure}[ht!]
\centering
  \setlength{\belowcaptionskip}{0pt}
	\setlength{\abovecaptionskip}{0pt}	
	    \subfigure[]
	    {
	        \includegraphics[width=0.20\linewidth]{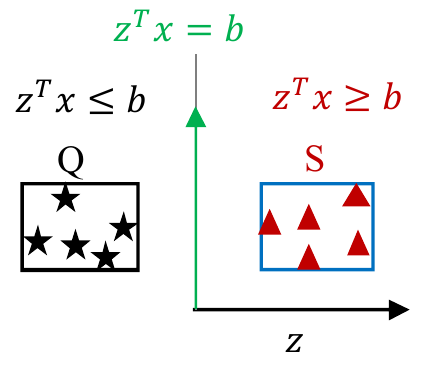}
	        \label{fig:proof_sat_a}
	    }
	    \subfigure[]
	    {
	        \includegraphics[width=0.20\linewidth]{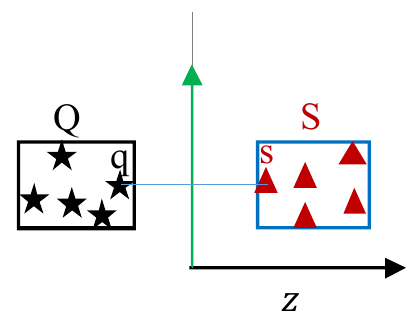}
	        \label{fig:proof_sat_b}
	    }
			\subfigure[]
	    {
	        \includegraphics[width=0.20\linewidth]{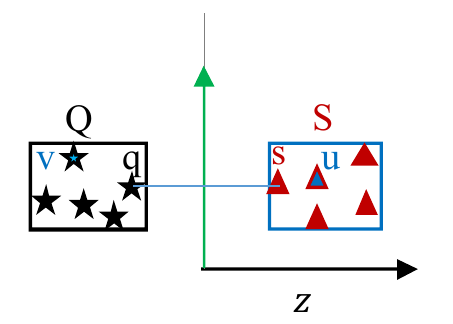}
	        \label{fig:proof_sat_c}
	    }
			
	    \caption{Construction of a separating hyperplane, $h$, $\{x|z^Tx=b\}$ between two disjoint convex objects Q and S~\cite{boyd2004convex}. (a) The hyperplane, $h$, separates the convex objects Q and S. The affine function $z^Tx-b$ is nonpositive on Q and nonnegative on S. (b) The points $q\in Q$ and $s\in S$ are the pair points in the two sets that are closest to each other, where the line segment $\overline{qs}$ between q and s is bisected by the separating hyperplane $h$ and $h\perp z$. (c) For a point $u\in S$, the affine function $f(u)$ is nonnegative on S and for a point $v\in Q$, the affine function $f(v)$ is nonpositive on Q.}
	    \label{fig:proof_sat}
\end{figure}

Let $u\in S$ and $v\in Q$ be two points. The Euclidean distance between the points is calculated as
 \begin{align}
  dist(u, v) = {\left\|u-v\right\|}_2 
	\label{sat_distance_eq}	
\end{align}
We consider a vector $DIST$ that contains the distances between the points of Q and S. There exists points $q\in Q$ and $s\in S$ which has the minimum distance between Q and S that is 
\begin{align}
  mindist(Q, S) = \inf\{DIST\} 
	\label{sat_distance_eq0}	
\end{align}
We now define
\begin{center}
$z=s-q$,~~~~~ $b=\frac{\left\|s\right\|_2^2 - \left\|q\right\|_2^2}{2}$ 
\end{center}
Thus by substituting the value of $z$ and $b$, the affine function $f(x)=z^Tx-b$ can be expressed as  
\begin{align}
  f(x)=(s-q)^T(x-\frac{s+q}{2})
	\label{sat_affine_eq1}	
\end{align}
The goal is to show that $f(x)$ is nonpositive on Q and nonnegative on S that is the hyperplane, $h$, $\{x|z^Tx=b\}$ is perpendicular to the normal vector $z$ and bisects the line segment $\overline{qs}$ between q and s as shown in Fig.~\ref{fig:proof_sat_b}.

We first prove that $f(x)$ is nonnegative on S. For any point $u\in S$ (as shown in Fig.~\ref{fig:proof_sat_c}), the affine function $f(u)$ can be written as 
 \begin{align}
  f(u) = (s-q)^T(u-\frac{s+q}{2}) > 0
\label{sat_affine_eq2}	
\end{align}

The affine function $f(u)$ can also be expressed as
  \begin{align}
  f(u) = (s-q)^T(u-s+\frac{s-q}{2}) =(s-q)^T(u-s)+\frac{\left\|s-q\right\|_2^2}{2}
\label{sat_affine_eq3}	
\end{align}

Now the Euclidean distance between $u\in S$ and $q\in Q$ can be calculated by taking the derivative of $f(u)$ for a closed interval $t$ (with $t$ as close to zero) as follows
\begin{align}
  \frac{d}{dt}\left\|s+t(u-s)-q\right\|_2^2|_{t=0}=2(s-q)^T(u-s)>0
\label{sat_affine_eq4}	
\end{align}
So for some small $t~(0<t\leq 1)$, we have
\begin{align}
  \left\|s+t(u-s)-q\right\|_2 > \left\|s-q\right\|_2
\label{sat_affine_eq5}	
\end{align}
that is the point $s+t(u-s)$ is closer to s than q. Since S is convex and contains s and u, we have $s+t(u-s)\in S$. Thus, the affine function $f(x) \forall x\in S$ is nonnegative on S. Similarly, for any point $x\in Q$, the affine function $f(x) \forall x\in Q$ is nonpositive on Q. Hence, the hyperplane, $h$ separates the convex objects S and Q. Therefore, there exists an axis $z\neq 0$ on which the projection of two disjoint convex objects S and Q does not overlap. 
\label{sat_proof}
\end{proof}

To illustrate Theorem~\ref{sat_theorem}, we consider two dimensional AABBs as shown in Fig~\ref{fig:aabb}b where two attributes $A_i$ and $A_j$ are projected on the x-axis and y-axis, respectively. ${AABB}_1$ is created based on black stars and ${AABB}_2$ is created based on red triangles. From the figure we can see that there is no overlap between ${AABB}_1$ and ${AABB}_2$ for the $A_j$ attribute. According to SAT, the x-axis is the separating axis between ${AABB}_1$ and ${AABB}_2$ and thus $A_i$ is the splitting attribute. Moreover, if multiple attributes have no overlap then the attribute having the maximum gap will be considered as the splitting attribute. Let $d_1$ be the gap between ${AABB}_1$ and ${AABB}_2$ on the x-axis (i.e. attribute $A_i$ ) and $d_2$ be the gap between ${AABB}_1$ and ${AABB}_2$ on the y-axis (i.e. attribute $A_j$ ) then the split attribute $A_s$ is determined based on Eq.~\eqref{sat_splitAttr_eq}~\cite{hu2018novel}.

\begin{align}
  A_{s} = \left\{
  \begin{array}{l l}
    A_i & \quad \text{if $d_1>d_2$ \& $d_1>0$}\\
    A_j & \quad \text{if $d_2>d_1$ \& $d_2>0$}\\
  \end{array} \right.
  ;\forall {i,j}
\label{sat_splitAttr_eq}
\end{align}

For the x-axis (attribute $A_i$), let $V^{i,1}_{min}$ and $V^{i,1}_{max}$ be the minimum and maximum values of ${AABB}_1$, and $V^{i,2}_{min}$ and $V^{i,2}_{max}$ be the minimum and maximum values of ${AABB}_2$. Similarly, for the y-axis (attribute $A_j$), let $V^{j,1}_{min}$ and $V^{j,1}_{max}$ be the minimum and maximum values of ${AABB}_1$, and $V^{j,2}_{min}$ and $V^{j,2}_{max}$ be the minimum and maximum values of ${AABB}_2$. Thus, the split value $SV$ for $A_s$ is calculated by using Eq.~\eqref{sat_splitVal_eq}~\cite{hu2018novel}.

\begin{align}
  SV = \left\{
  \begin{array}{l l}
    \frac{V^{i,1}_{max}+V^{i,2}_{min}}{2} & \quad \text{if $d_1>d_2$ \& $V^{i,1}_{max}>V^{i,2}_{min}$}\\
		\frac{V^{i,2}_{max}+V^{i,1}_{min}}{2} & \quad \text{if $d_1>d_2$ \& $V^{i,2}_{min}>V^{i,1}_{max}$}\\
    \frac{V^{j,1}_{max}+V^{j,2}_{min}}{2} & \quad \text{if $d_2>d_1$ \& $V^{j,1}_{max}>V^{j,2}_{min}$}\\
		\frac{V^{j,2}_{max}+V^{j,1}_{min}}{2} & \quad \text{if $d_2>d_1$ \& $V^{j,2}_{min}>V^{j,1}_{max}$}\\
  \end{array} \right.
  ;\forall {i,j}
\label{sat_splitVal_eq}
\end{align}

Using the SAT based splitting strategy, incremental decision forest algorithms (such as CIRF~\cite{hu2018novel}) find the split attribute which has the maximum distance among all numerical attributes. For example, the two bounding boxes ${AABB}_1$ and ${AABB}_2$ are separated by the attribute $A_i$ in Fig~\ref{fig:aabb}a. However, the algorithms do not perform well on $D^{t^i}_B$ with MKUC scenario (as discussed in Section~\ref{basicconcept}). Moreover, the algorithms fail to find a split if the bounding boxes are non-separable. For example, Fig.~\ref{fig:aabb}d shows that two bounding boxes ${AABB}_1$ and ${AABB}_2$ that are not separated by any attributes. 

We argue that the accuracy of such algorithms can be improved if it is possible to find the best split in the case of overlapping bounding boxes. We propose an improved separating axis theorem (iSAT) where we propose a definition and two theorems. In iSAT we first define a sub-box called Sub Axis aligned minimum bounding box (Sub-AABB) as follows.  

\begin{definition}
\label{defsubbox}
If we have a set of data points then a Sub-AABB is the smallest axis aligned minimum bounding box of a subset of the set of data points.      
\end{definition}

Three sample Sub-AABBs of an AABB are illustrated in Fig.~\ref{fig:sub_box_nonoverlap_proof}a where the Sub-AABBs are marked with the dotted red rectangles. According to the Definition~\ref{defsubbox}, two overlapping AABBs (as shown in Fig.~\ref{fig:aabb}d) can have a number of overlapping and non-overlapping Sub-AABBs as illustrated in Fig.~\ref{fig:sub_box_nonoverlap_proof}b.

\begin{figure}[ht!]
\centering
  \setlength{\belowcaptionskip}{0pt}
	\setlength{\abovecaptionskip}{0pt}	
	 \subfigure[]
	    {
	        \includegraphics[width=0.15\linewidth]{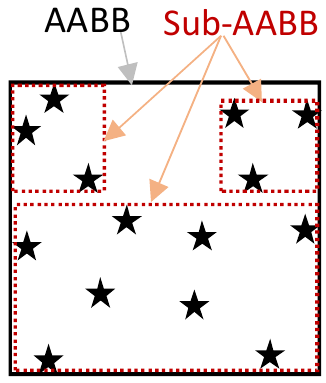}
	        \label{fig:sub_box}
	    }
	    \subfigure[]
	    {
	        \includegraphics[width=0.30\linewidth]{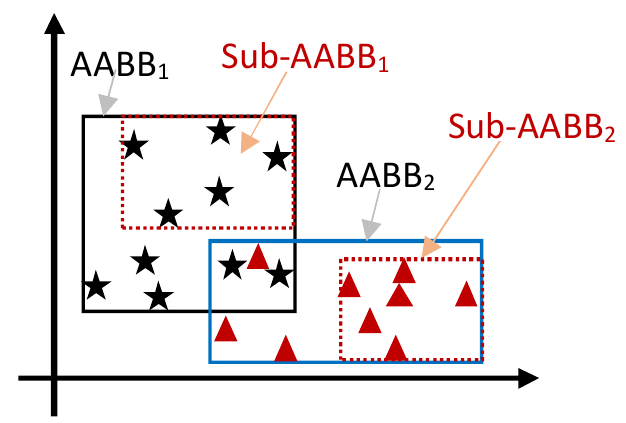}
	        \label{fig:sub_box_non_overlap}
	    }
			\subfigure[]
	    {
	        \includegraphics[width=0.30\linewidth]{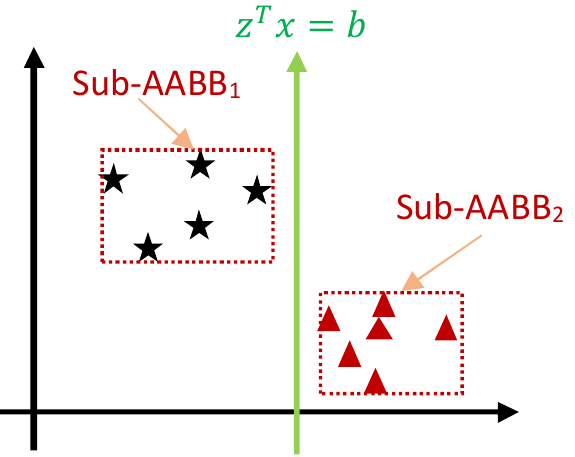}
	        \label{fig:sub_box_non_overlap_proof}
	    }
	 \caption{(a) A 2-dimensional illustration of Sub-AABBs. (b) The ${AABB}_1$ and ${AABB}_2$ consist of sub-AABBs ${Sub-AABB}_1$ and ${Sub-AABB}_2$, respectively. (c) The hyperplane separates the two sub-AABBs ${Sub-AABB}_1$ and ${Sub-AABB}_2$.}
	 \label{fig:sub_box_nonoverlap_proof}
\end{figure}

\begin{theorem}
If there are two overlapping AABBs then it is possible to have non-overlapping Sub-AABBs of the two AABBs.
\label{subbox_nonoverlap_theorem}
\end{theorem}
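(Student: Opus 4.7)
The plan is to prove existence by explicit construction. Let $P_1$ and $P_2$ denote the underlying data point sets whose minimum bounding boxes are ${AABB}_1$ and ${AABB}_2$, and assume geometrically that ${AABB}_1 \cap {AABB}_2 \neq \emptyset$. I would exploit the minimality property in Definition~\ref{defaabb}: on every axis $j$, some point of $P_1$ attains $V^{j,1}_{min}$ and some attains $V^{j,1}_{max}$, and likewise for $P_2$. Because $P_1$ and $P_2$ arise from records with different class labels (e.g.\ the black stars and red triangles of Fig.~\ref{fig:aabb}d), the two point sets are distinct, and I would argue that this forces at least one pair of extremal coordinates to differ across the two AABBs; otherwise every black star would coincide with a red triangle, a degeneracy I handle separately at the end.

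Next I would pick such an axis $j^{\ast}$ and, without loss of generality, assume $V^{j^{\ast},1}_{max} < V^{j^{\ast},2}_{max}$. I would then carry out the construction $P_1' = P_1$ and
\[
P_2' \;=\; \{\, x \in P_2 \;:\; x_{j^{\ast}} > V^{j^{\ast},1}_{max} \,\}.
\]
The point of $P_2$ realising $V^{j^{\ast},2}_{max}$ belongs to $P_2'$, so $P_2' \neq \emptyset$. By Definition~\ref{defsubbox}, the Sub-AABB of $P_1'$ has maximum $j^{\ast}$-coordinate equal to $V^{j^{\ast},1}_{max}$, while every point of $P_2'$ has $j^{\ast}$-coordinate strictly above $V^{j^{\ast},1}_{max}$, so the Sub-AABB of $P_2'$ has minimum $j^{\ast}$-coordinate strictly greater than $V^{j^{\ast},1}_{max}$. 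The projections of the two Sub-AABBs on the $j^{\ast}$-axis are therefore disjoint, and invoking Theorem~\ref{sat_theorem} gives a separating hyperplane, hence the two Sub-AABBs themselves do not overlap.

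The main obstacle I anticipate is the pathological case where ${AABB}_1$ and ${AABB}_2$ coincide as boxes, so no axis supplies differing extrema for the construction above. In that case I would fall back to singleton Sub-AABBs: since $P_1$ and $P_2$ are labelled differently, I can pick $p_1 \in P_1$ and $p_2 \in P_2$ with $p_1 \neq p_2$ as spatial points, and take $P_1' = \{p_1\}$, $P_2' = \{p_2\}$. Their Sub-AABBs are degenerate single points at distinct locations, which are trivially non-overlapping, completing the proof in every subcase.
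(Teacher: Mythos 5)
Your proof is correct, and it takes a genuinely different --- and in fact more substantive --- route than the paper's. The paper's own argument simply \emph{posits} two Sub-AABBs with ${Sub\text{-}AABB}_1\cap {Sub\text{-}AABB}_2=\emptyset$ (pointing to Fig.~\ref{fig:sub_box_nonoverlap_proof}b) and then invokes Theorem~\ref{sat_theorem} to produce the separating hyperplane; the existence of the disjoint pair, which is the actual content of the claim, is assumed rather than derived. You instead construct the disjoint pair explicitly: you isolate an axis $j^{\ast}$ on which the extremal coordinates of the two boxes differ, keep $P_1$ whole, and cut $P_2$ down to the points lying strictly beyond $V^{j^{\ast},1}_{max}$, so that the two Sub-AABBs have disjoint projections on $j^{\ast}$ by construction and Theorem~\ref{sat_theorem} then applies legitimately. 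This buys a non-circular proof of existence and also surfaces the genuine boundary cases: the theorem as stated can fail when both point sets collapse to a single common location, and your singleton fallback correctly isolates that as the only obstruction. Two small points to tighten: Definition~\ref{defsubbox} speaks of ``a subset,'' so if a proper subset is intended you should discard one point from $P_1$ (any point not attaining $V^{j^{\ast},1}_{max}$, or a proper nonempty subset that still realises that maximum); and in the coincident-box case you should note explicitly that a non-degenerate common box forces $P_1$ to contain two spatially distinct points, which guarantees the choice $p_1\neq p_2$ is always available there.
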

\begin{proof}
We consider two nonempty convex objects ${AABB}_1$ and ${AABB}_2$ where ${AABB}_1\cap {AABB}_2 \neq \emptyset$ as illustrated in Fig.~\ref{fig:aabb}d. According to Definition~\ref{defsubbox}, both ${AABB}_1$ and ${AABB}_2$ may have a number of Sub-AABBs. Let ${Sub-AABB}_1$ and ${Sub-AABB}_2$ be two Sub-AABBs where ${Sub-AABB}_1\cap {Sub-AABB}_2 = \emptyset$ as illustrated in Fig.~\ref{fig:sub_box_nonoverlap_proof}b. Then there exist $z\neq 0$ and $b$ such that $x|z^Tx\leq b;\forall x\in {Sub-AABB}_1$ and $x|z^Tx\geq b;\forall x\in {Sub-AABB}_2$. That is the affine function, $f(x)=z^Tx-b$ is nonpositive on ${Sub-AABB}_1$ and nonnegative on ${Sub-AABB}_2$. The hyperplane, $h$, $\{x|z^Tx=b\}$ is considered as the separating hyperplane for the two objects ${Sub-AABB}_1$ and ${Sub-AABB}_2$. According to Theorem~\ref{sat_theorem}, for any point $x\in {Sub-AABB}_2$ the affine function $f(x) \forall x\in {Sub-AABB}_2$ is nonnegative on ${Sub-AABB}_2$. Similarly, for any point $x\in {Sub-AABB}_1$, the affine function $f(x) \forall x\in {Sub-AABB}_1$ is nonpositive on ${Sub-AABB}_2$. Hence, the hyperplane, $h$, separates the sub-AABBs ${Sub-AABB}_1$ and ${Sub-AABB}_2$. Therefore, there exists an axis $z\neq 0$ on which the projection of two disjoint sub-AABBs ${Sub-AABB}_1$ and ${Sub-AABB}_2$ does not overlap which is illustrated in Fig.~\ref{fig:sub_box_nonoverlap_proof}c.
\label{subbox_nonoverlap_theorem_proof}
\end{proof}

\begin{theorem}
If there are two overlapping AABBs, then there will be at least two overlapping Sub-AABBs of the two AABBs.
\label{subbox_overlap_theorem}
\end{theorem}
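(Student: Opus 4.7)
The plan is to prove the claim by direct construction, exhibiting two Sub-AABBs---one from each point set---whose bounding boxes intersect. Since the two AABBs overlap by hypothesis, ${AABB}_1 \cap {AABB}_2 \neq \emptyset$, so the intersection region $R = {AABB}_1 \cap {AABB}_2$ is itself a non-empty axis-aligned box. The task reduces to selecting subsets of the underlying data points whose minimum bounding boxes are forced to meet inside $R$.

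First I would appeal to Definition~\ref{defsubbox}: a Sub-AABB is the smallest axis-aligned bounding box of a subset of a given point set, and the full point set is itself a valid subset. Hence ${AABB}_1$ and ${AABB}_2$ are, trivially, Sub-AABBs of their respective point sets, and by the overlap hypothesis these two Sub-AABBs intersect. This already suffices for the bare existence statement, and I would record it as a base case before moving on to a more informative witness.

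To match the picture in Fig.~\ref{fig:sub_box_nonoverlap_proof}(b), where the paper wants nontrivial nested Sub-AABBs inside the overlap region, I would next construct a proper-subset witness. For each coordinate axis $i$, overlap of ${AABB}_1$ and ${AABB}_2$ forces the intervals $[V^{i,1}_{min}, V^{i,1}_{max}]$ and $[V^{i,2}_{min}, V^{i,2}_{max}]$ to overlap, so each AABB contains data points achieving these extremes whose projections straddle $R$ along axis $i$. Grouping those axis-extremal points from ${AABB}_1$ into one subset and from ${AABB}_2$ into another produces two Sub-AABBs whose projection onto \emph{every} axis already intersects. By the contrapositive of Theorem~\ref{sat_theorem}, no separating hyperplane can exist between them, and therefore the two Sub-AABBs overlap.

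The main obstacle I anticipate is the edge case in which neither point set has any data point lying geometrically inside $R$---for instance when both AABBs are pinned only by corner points. There, the trivial full-subset construction still goes through, but justifying a \emph{proper} overlapping Sub-AABB requires careful accounting of which axis-extremal points may be combined. The cleanest route is to fix a single axis $i$ along which the projections overlap, pick a witness $p_1 \in {AABB}_1$ achieving $V^{i,1}_{max}$ and a witness $p_2 \in {AABB}_2$ achieving $V^{i,2}_{min}$ (with $V^{i,2}_{min} \le V^{i,1}_{max}$), and show that any two Sub-AABBs containing $p_1$ and $p_2$ respectively must already meet along axis $i$; combining this with a symmetric argument on a second axis closes the proof without reliance on the trivial case.
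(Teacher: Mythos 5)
Your proof is correct for the theorem as literally stated, but it takes a genuinely different route from the paper's. The paper never actually constructs the overlapping Sub-AABBs: it \emph{posits} two Sub-AABBs with nonempty intersection (pointing to Fig.~\ref{fig:sub_aabb_overlap_proof}), fixes the coordinate orderings $x_1<x_2<x_3<x_4$ and $y_1<y_2<y_3<y_4$, and then verifies that neither coordinate axis separates them; the existence that the theorem asserts is assumed rather than derived. You instead derive existence directly from Definition~\ref{defsubbox}: every set is a subset of itself, so the two AABBs are themselves Sub-AABBs and overlap by hypothesis, and your axis-extremal construction upgrades this to proper point subsets whose projections meet on every coordinate axis --- which, for axis-aligned boxes, is equivalent to intersection (your appeal to the contrapositive of Theorem~\ref{sat_theorem} is legitimate here precisely because for AABBs the coordinate axes exhaust the candidate separating directions). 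Your argument is the more rigorous of the two, and its chief value is diagnostic: the witnesses it produces are geometrically identical to the original AABBs (the bounding box of the axis-extremal points of a point set is that set's own AABB), which exposes the theorem as stated as nearly vacuous; it does not by itself deliver the configuration ADF actually uses downstream, namely one Sub-AABB concentrated in the overlap region containing the mixed-class records and another lying outside it. Neither your proof nor the paper's establishes that stronger, intended statement.
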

\begin{proof}
We again consider two nonempty convex objects ${AABB}_1$ and ${AABB}_2$ where ${AABB}_1\cap {AABB}_2 \neq \emptyset$ as illustrated in Fig.~\ref{fig:aabb}d. According to Definition~\ref{defsubbox}, both ${AABB}_1$ and ${AABB}_2$ may have a number of Sub-AABBs. Let ${Sub-AABB}_3$ and ${Sub-AABB}_4$ be two Sub-AABBs where ${Sub-AABB}_3\cap {Sub-AABB}_4 \neq \emptyset$ as illustrated in Fig.~\ref{fig:sub_aabb_overlap_proof}a. For simplicity, we consider a two dimensional projection of ${Sub-AABB}_3$ and ${Sub-AABB}_4$ on the x-axis and y-axis as shown in Fig.~\ref{fig:sub_aabb_overlap_proof}b. On the x-axis, let $x_1$ and $x_3$ be the lower bound and upper bound, respectively of ${Sub-AABB}_3$ and $x_2$ and $x_4$ be the lower bound and upper bound, respectively of ${Sub-AABB}_4$ such that $x_1<x_2<x_3<x_4$. On the y-axis, let $y_1$ and $y_3$ be the lower bound and upper bound, respectively of ${Sub-AABB}_4$ and $y_2$ and $y_4$ be the lower bound and upper bound, respectively of ${Sub-AABB}_3$ such that $y_1<y_2<y_3<y_4$.
Let $u$ and $v$ be two points of ${Sub-AABB}_3$ and ${Sub-AABB}_4$, respectively. If $\left\|v\right\|_2^2<\left\|u\right\|_2^2$ for any points $u\in {Sub-AABB}_3$ and $v\in {Sub-AABB}_4$ then there does not exist any axis for which the two Sub-AABBs are separable. 

On the x-axis, let $u=x_3$ and $v=x_2$. Since $x_2<x_3$ then $\left\|v\right\|_2^2<\left\|u\right\|_2^2$. Thus, ${Sub-AABB}_3$ and ${Sub-AABB}_4$ are not separable on the x-axis.

Similarly, on the y-axis, let $u=y_3$ and $v=y_2$. Since $y_2<y_3$ then $\left\|v\right\|_2^2<\left\|u\right\|_2^2$. Thus, ${Sub-AABB}_3$ and ${Sub-AABB}_4$ are not separable on the y-axis. Therefore, it can be concluded that the two overlapping AABBs consist at least two overlapping Sub-AABBs.  

\label{subbox_overlap_theorem_proof}
\end{proof}
\begin{figure}[ht!]
\centering
  \setlength{\belowcaptionskip}{0pt}
	\setlength{\abovecaptionskip}{0pt}	
	 \subfigure[]
	    {
	        \includegraphics[width=0.30\linewidth]{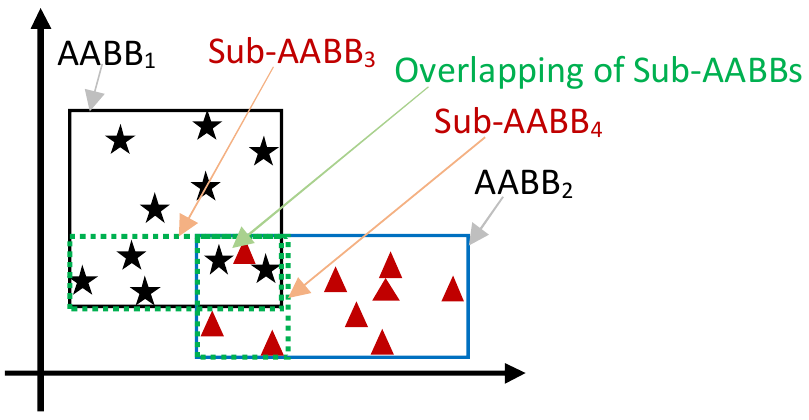}
	        \label{fig:sub_box_overlap1}
	    }
	    \subfigure[]
	    {
	        \includegraphics[width=0.30\linewidth]{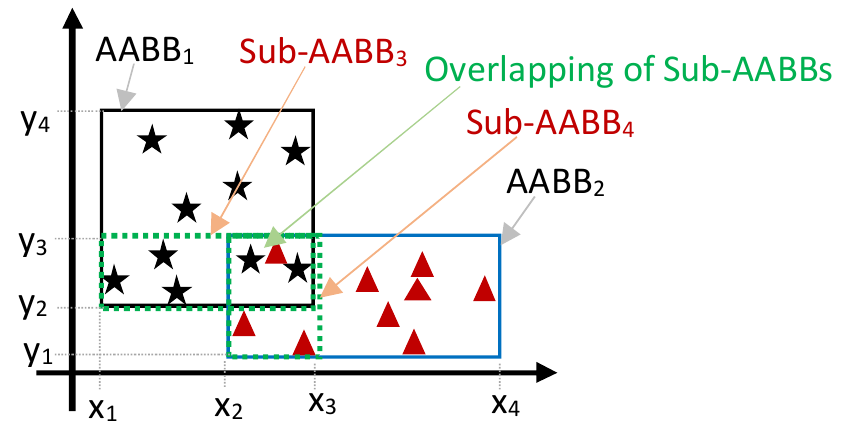}
	        \label{fig:sub_box_overlap_proof}
	    }
	    \subfigure[]
	    {
	        \includegraphics[width=0.30\linewidth]{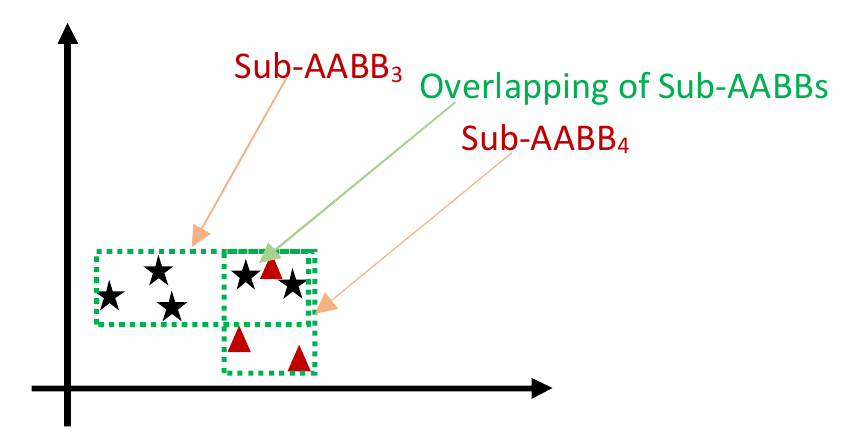}
	        \label{fig:sub_box_overlap2}
	    }
		
	 \caption{(a)  The ${AABB}_1$ and ${AABB}_2$ consist of overlapping Sub-AABBs. (b) The boundaries of AABBs and Sub-AABBs are projected on the x-axis and y-axis. (c) ${Sub-AABB}_1$ and ${Sub-AABB}_2$ are overlapping. }
	 \label{fig:sub_aabb_overlap_proof}
\end{figure}

Following Theorem~\ref{subbox_nonoverlap_theorem} and Theorem~\ref{subbox_overlap_theorem} we realise that the best split-attribute ($A_s$) and split-value ($SV$) can be determined by considering the overlapping and non-overlapping Sub-AABBs. For non-overlapping Sub-AABBs, we use the SAT based splitting strategy to find the best $A_s$ and $SV$. On the other hand, for overlapping Sub-AABBs, we first identify the Sub-AABB which has the minimum number of records that have a mix of class values. For example, in Fig~\ref{fig:sub_aabb_overlap_proof}c, we can see that ${Sub-AABB}_4$ has the minimum number of records that have a mix of class values. For ${Sub-AABB}_4$ we use the entropy based splitting strategy to find the best $A_s$ and $SV$. The steps of the iSAT is shown in Algorithm~\ref{algo_iSAT}. The experimental results on the toy data set (see Fig.\ref{fig:justify_sat_entropy_isat}) also justifies the use of both SAT and entropy based splitting strategies while modifying the classifier $T$ incrementally.

\subsubsection{Decision Trees Repairable Strategy}
\label{dtrs}

We propose a novel strategy in our framework ADF to decide whether a decision forest, $T$ is repairable or not. We first calculate the confidence of all leaves of the decision forest, $T$. Let $L^{c^{t^i}}_{pq}$ be the confidence of the $q$-th leaf of the $p$-th tree for the batch data set $D^{t^i}_B$ at time $t^i$. For a new batch data set $D^{t^j}_B$ at time $t^j$, we also calculate the confidence of all leaves of the decision forest, $T$. Let $L^{c^{t^j}}_{pq}$ be the confidence of the $q$-th leaf of the $p$-th tree for $D^{t^j}_B$. We then calculate a repairable matrix $F$ for $T$ by comparing confidences that are calculated from the batch data sets $D^{t^i}_B$ and $D^{t^j}_B$. A zero value in an element $F_{pq}\in F$ indicates that the $q$-th leaf of the $p$-th tree is not perturbed, whereas a non-zero value indicates that the $q$-th leaf of the $p$-th tree is perturbed. The value of the elements $F_{pq}\in F$ is calculated by using Eq.~\ref{faultymatrix}.

\begin{align}
  F_{pq} = \left\{
  \begin{array}{l l}
    1 & \quad \text{if $L^{c^{t^i}}_{pq} > (L^{c^{t^j}}_{pq}+\epsilon)$}\\
    0 & \quad \text{Otherwise}\\
  \end{array} \right.
  ;\forall {p,q}
\label{faultymatrix}
\end{align}
where $\epsilon$ is an error tolerance threshold. 

We now calculate the ratio of perturbed leaves of the decision forest, $T$. Let $l$ be the total leaves of a tree and $l_{total}$ be the total leaves of the decision forest. Also let $F_{total} \leftarrow \sum_{\forall p,q} {F_{pq}}$ be the total number of perturbed leaves in $T$. We calculate the ratio of perturbed leaves $\vartheta\leftarrow F_{total}/l_{total}$ in $T$. If the value of $\vartheta$ is less than or equal to a user-defined threshold, $\theta$ (also called as the repairable threshold) then $T$ is considered as repairable. 

\begin{figure}[ht!]
\centering
  \setlength{\belowcaptionskip}{0pt}
	\setlength{\abovecaptionskip}{0pt}	
	    \subfigure[LDPA dataset]
	    {
	        \includegraphics[width=0.45\linewidth]{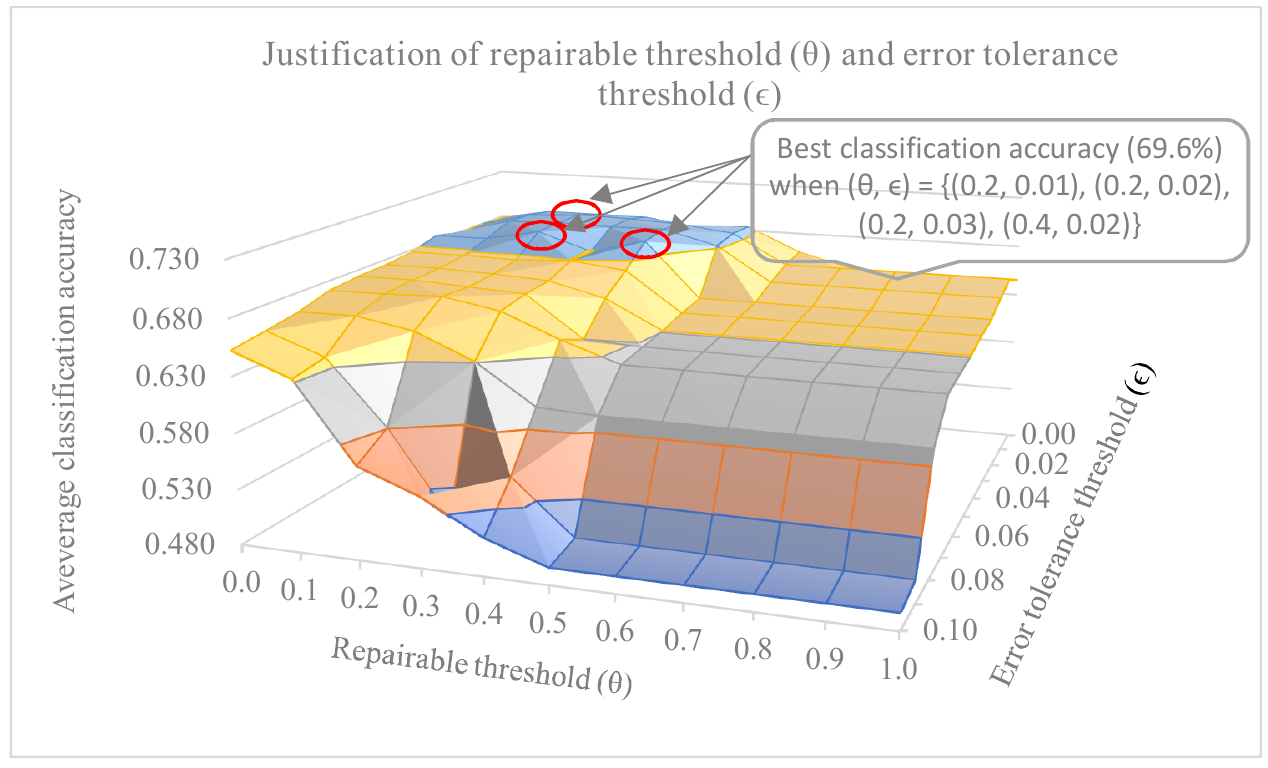}
	        \label{fig:Justify_rt_et_LDPA_accuarcy}
	    }
	    \subfigure[Avila dataset]
	    {
	        \includegraphics[width=0.45\linewidth]{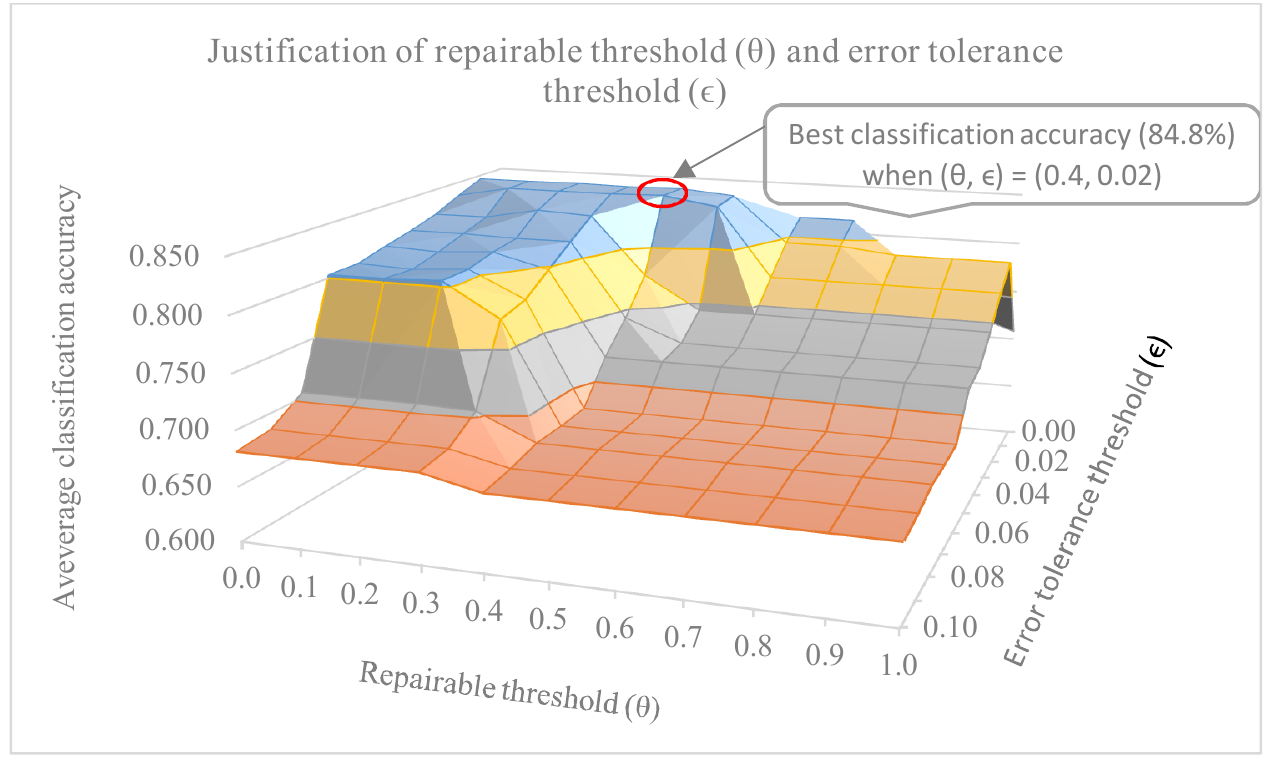}
	        \label{fig:Justify_rt_et_Avila_accuarcy}
	    }
	    \caption{Justification of repairable threshold $\theta$ and error tolerance threshold $\epsilon$ on LDPA and Avia data sets in terms of classification accuracy.}
	    \label{fig:justify_rt_et}
\end{figure}

We test the influences of different repairable threshold ($\theta$) and error tolerance threshold ($\epsilon$) values on two real data sets, namely LDAP~\cite{Frank+Asuncion:2010}, and Avila~\cite{Frank+Asuncion:2010} in terms of classification accuracy. From the data sets, we create 34 batches of training data and testing data (for details see Section~\ref{Experimental_settings}). We incrementally apply ADF on the training and testing batches and calculate the classification accuracies. We use the user-defined $\theta$ that varies between 0 and 1, and $\epsilon$ that varies between 0.0 and 0.1. For each combination of $\theta$ and $\epsilon$, we calculate average classification accuracy (from all 34 batches). The average classification accuracies on two data sets are presented in Fig.~\ref{fig:justify_rt_et}. Note that the perfect combination of $\theta$ and $\epsilon$ enables the ADF to perform the best. It is clear from the Fig.~\ref{fig:justify_rt_et} that we get the best classification accuracy (marked with circles) when $\theta$ is between 0.2 and 0.4 and $\epsilon$ is between 0.01 and 0.03 for both data sets.

\subsubsection{Handling concept drifts using three parallel forests}
\label{justify_pf_af_tf}

We explained in the basic concept (see Section~\ref{basicconcept}) that the distribution of data in dynamic situations may change over time, and Temporary Concept Drift (TCD) and Sustainable Concept Drift (SCD) may occur. In ADF, we handle both TCD and SCD by introducing three parallel decision forests, namely Permanent Forest (PF), $T^P$, Active Forest (AF), $T^A$,  and Temporary Forest (TF), $T^T$. The main motivation of the parallel forests is to keep historical knowledge leading to a high classification accuracy.  

The mechanism of ADF in parallel learning $T^P$, $T^A$ and $T^T$ is shown in Fig.~\ref{fig:justify_parallel_classifier}. For every new batch, $T^P$ is adapted. Besides, if $T^A$ is repairable, we adapt $T^A$ for the new batch; otherwise we check the adaptability of $T^T$, if $T^T$ is repairable, we adapt $T^T$ with the new batch; otherwise, we build a $T^T$ using a window of batches (see Section~\ref{justify_window}). Among the three forests, ADF suggests the user with the forest which achieves the best classification accuracy. 

\begin{figure}[ht!]
\centering
  \setlength{\belowcaptionskip}{0pt}
	\setlength{\abovecaptionskip}{0pt}	
	\includegraphics[width=0.90\linewidth]{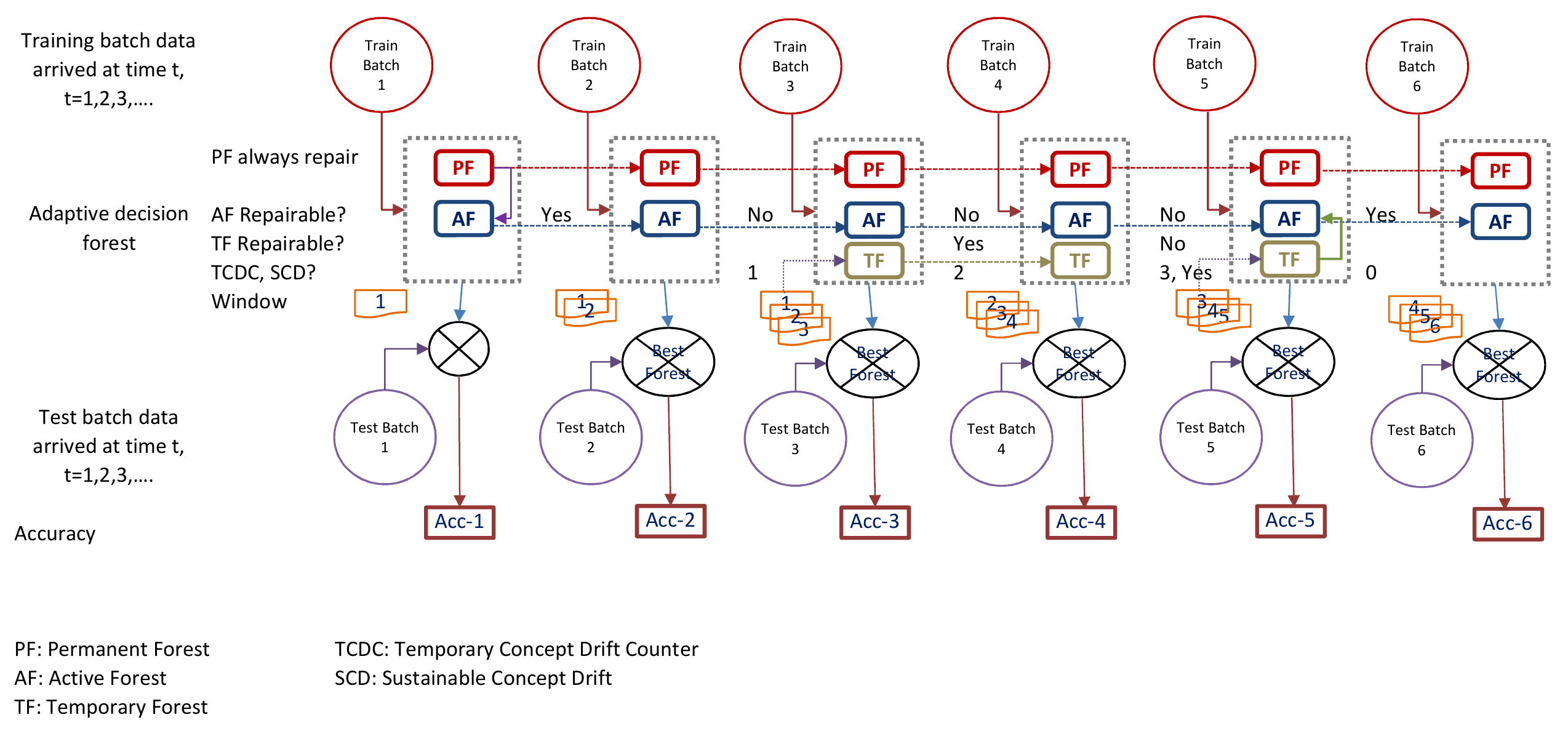}
	 \caption{Parallel incremental learning of $T^P$, $T^A$ and $T^T$.}
	 \label{fig:justify_parallel_classifier}
\end{figure}

We test the use of parallel forests on the LDPA data set. We create 34 batches of training and test data as described in Section~\ref{Experimental_batch_creation}. The performances of PF, PF, TF, and ADF on the LDPA data set is shown in Fig.~\ref{fig:justify_parallel_classifier_exp}. We can see that TF achieves the highest accuracy for the batches where AF is not repairable. Although AF is repaired for some batches (27 to 29), PF achieves the highest accuracy. The experimental result indicates the importance of the use of parallel forests in ADF.   

\begin{figure}[ht!]
\centering
  \setlength{\belowcaptionskip}{0pt}
	\setlength{\abovecaptionskip}{0pt}	
	\includegraphics[width=1.0\linewidth]{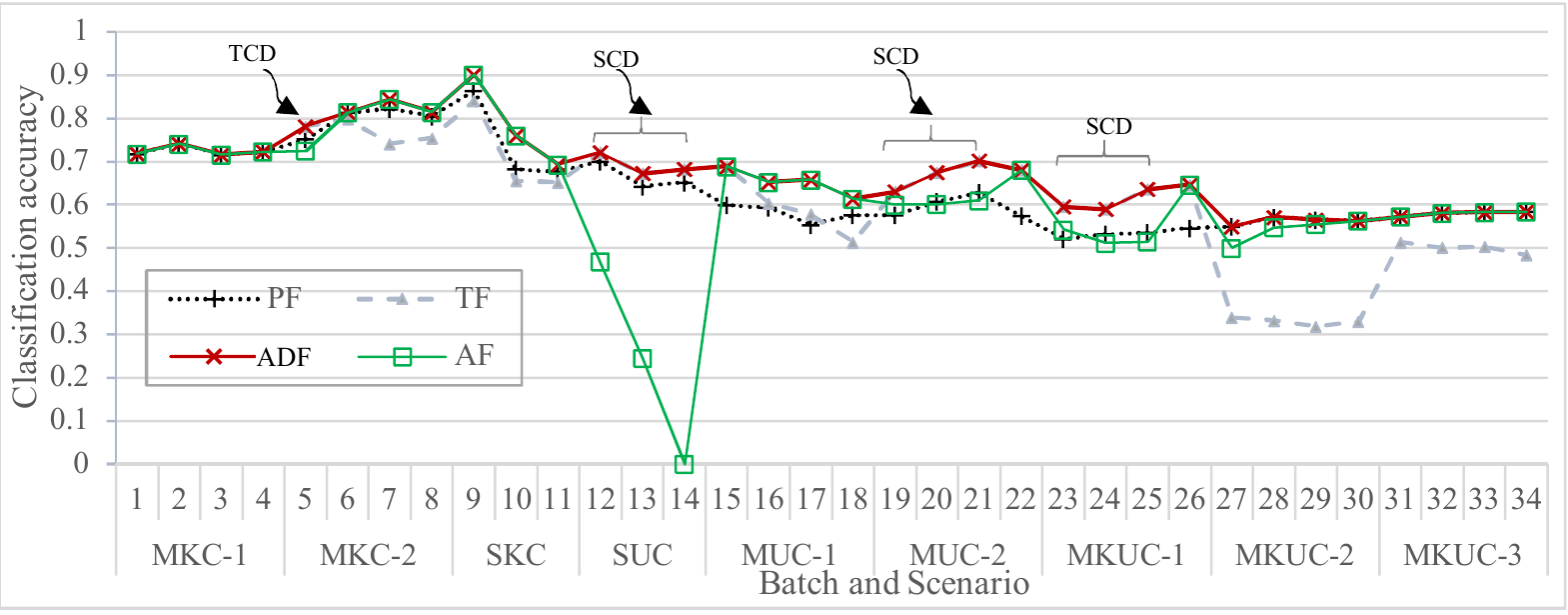}
	 \caption{Classification accuracies of PF, AF, TF and ADF on LDPA data set and the identification of temporary concept drift (TCD) and sustainable concept drift (SCD).}
	 \label{fig:justify_parallel_classifier_exp}
\end{figure}

\subsubsection{Identification of Sustainable Concept Drift (SCD)}
\label{identify_concept_drift}
The repairable strategy of ADF (see Section~\ref{dtrs}) enables it to determine the event of Sustainable Concept Drift (SCD) which is discussed in Section~\ref{basicconcept}. Let $cdf$ be the counter which counts the number of consecutive batches for which the active forest $AF$ (see Section~\ref{justify_pf_af_tf}) is not repairable. The event of SCD is determined by using Eq.~\ref{eq_concept_drift}.

\begin{align}
  SCD = \left\{
  \begin{array}{l l}
    true & \quad \text{if $cdf > \lambda$}\\
    false & \quad \text{Otherwise}\\
  \end{array} \right.
\label{eq_concept_drift}
\end{align}
where, $\lambda$ be the user-defined concept drift threshold. The default value of $\lambda$ is set to 3. Once a SCD is determined, we replace $T^A$ with $T^T$. However, if $cdf \leq \lambda$ and $T^A$ is repairable, we reset $cdf$ and $T^T$. Figure~\ref{fig:justify_parallel_classifier_exp} shows the detection of SCD on LDPA data set.

\subsubsection{Analysis of impact of using a window of batches}
\label{justify_window}

For handling SCD, we replace active forest, $T^A$, with temporary forest, $T^T$ which is rebuilt in case of TCD (see Section~\ref{justify_pf_af_tf}). We argue that if the data of a current batch are drawn from the distribution of the previous batches, the newly build $T^T$ may not classify the records correctly and may lead to a low classification accuracy. Therefore, to keep the accuracy high, we rebuild $T^T$ based on a window of batches. In ADF, we introduce a user-defined threshold $\gamma$ which is the size of the window. The default value of $\gamma$ is 3.

We test the influence of using a window of batches in ADF on a synthetic data set (see Table~\ref{tab:Datasets}) by creating 8 batches of training data and testing data (for details see Section~\ref{Experimental_settings}). For a batch $D^{t^i}_B$, we build a decision forest by applying SysFor~\cite{islam2011knowledge} on ${D^{t^i}_{B_{train}}}$ and noted the execution time. The accuracy of the forest is calculated by classifying the records of $D^{t^i}_{B_{test}}$. 
Similarly, we apply SysFor~\cite{islam2011knowledge} on a window that contains the last $\gamma$ number of batches and calculate the classification accuracy and execution time. Figure~\ref{fig:justify_window_rf} shows that SysFor (with a window of batches) achieves a higher classification accuracy at the expense of a slightly higher execution time. The result indicates the importance of using the window of batches while rebuilding $T^T$ in ADF.

\begin{figure}[ht!]
\centering
  \setlength{\belowcaptionskip}{0pt}
	\setlength{\abovecaptionskip}{0pt}	
	    \subfigure[Classification accuracies of SysFor~\cite{islam2011knowledge} and SysFor (Window) methods on Synthetic data set]
	    {
	        \includegraphics[width=0.45\linewidth]{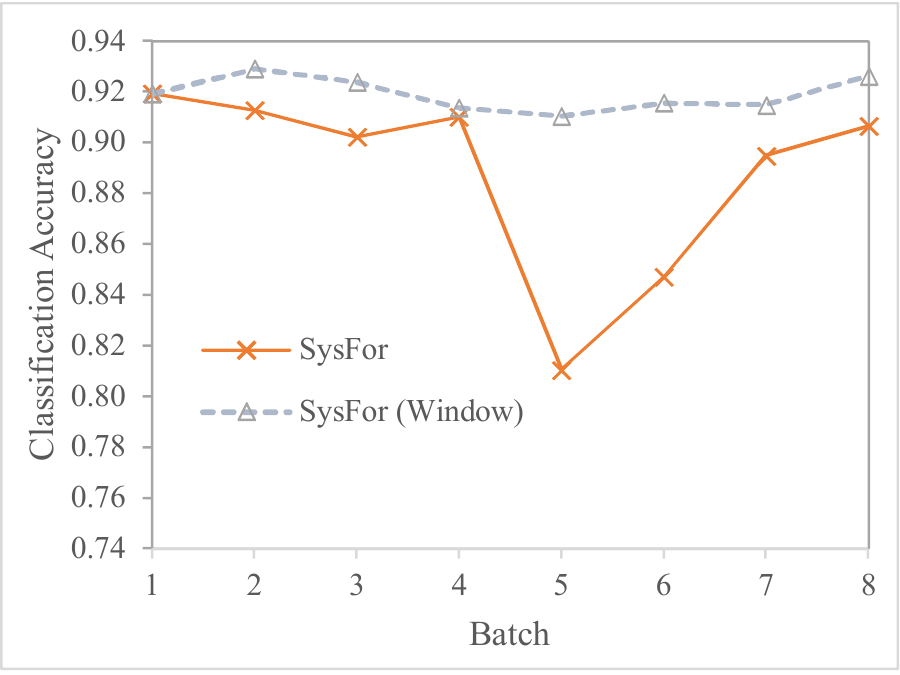}
	        \label{fig:justification_window_synthetic_accuracy}
	    }
	    \subfigure[Execution time of SysFor and SysFor (Window) methods on Synthetic data set]
	    {
	        \includegraphics[width=0.45\linewidth]{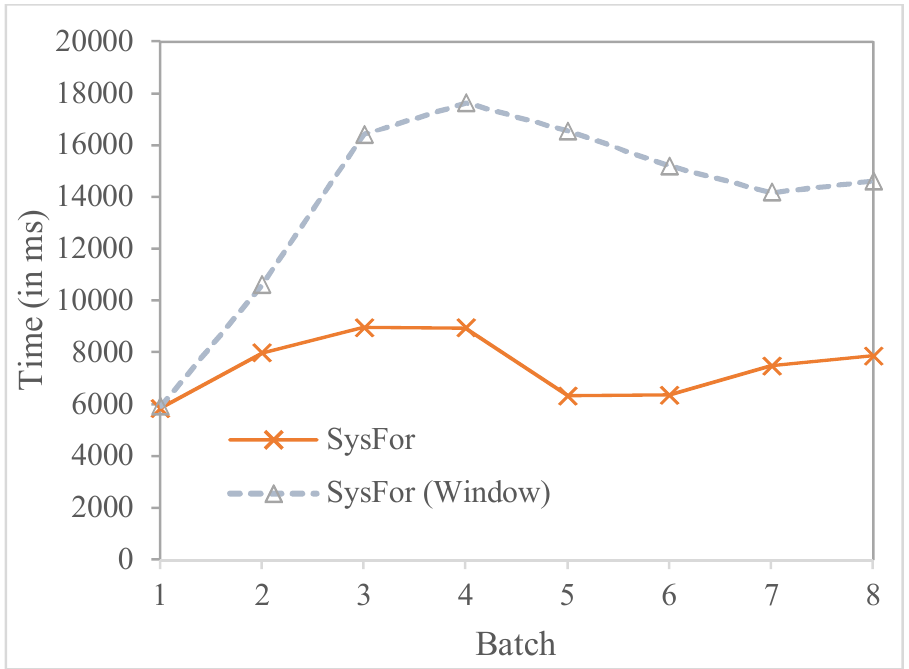}
	        \label{fig:justification_window_synthetic_time}
	    }
			
	    \caption{Classification accuracies and execution times of SysFor~\cite{islam2011knowledge} and SysFor (Window) methods on Synthetic data set.}
	    \label{fig:justify_window_rf}
\end{figure}

\subsection{Incremental Tree Growing Mechanism of ADF}
\label{steps_framework}

To accommodate all scenarios as shown in Eq.~\ref{class_scenarios} and Fig.~\ref{fig:framework}, in ADF we propose an incremental tree growing mechanism, which is presented in Algorithm~\ref{algo_rfil}. 

At time $t^0$, ADF takes a batch data set ${D^{t^0}_{B_{train}}}$ (which is renamed as $D^B$ for simplicity) as input. It also takes a number of parameters as shown in Algorithm~\ref{algo_rfil}. Initially, the user-defined thresholds are initialised with their default values (see Section~\ref{dtrs}, Section~\ref{identify_concept_drift} and Section~\ref{justify_window}) and the forests and the statistics are set to $null$. For $D^B$, ADF makes use of Algorithm~\ref{algo_rfil} to build decision forests and calculate their statistics. ADF finds the best forest and recommends it for the user.

At time $t^i$, for $D^B$ (which is actually ${D^{t^i}_{B_{train}}}$) ADF updates the forests and statistics by using Algorithm~\ref{algo_rfil} and recommends the best forest for the user. ADF repeats this process for the remaining batches.

We now present the main steps of Algorithm~\ref{algo_rfil} as follows. In Step 1, we build $T^P$ (if it is $null$) by applying an existing decision forest algorithm such as RF~\cite{breiman2001random} on $D^B$. We also make a copy of $T^P$ to $T^A$. In Step 2, we first identify the perturbed leaves $F^P$ by using Eq.~\eqref{faultymatrix}. We then repair $T^P$ by using a procedure called RepairForest() which is presented in Algorithm~\ref{algo_repairRF}.  RepairForest() takes $T^P$, $D^B$, $L^P$, $F^P$ and $\theta$ as input and returns updated permanent forest ($T^{P\prime}$) and updated leaf statistics ($L^{P\prime}$) as output. In Step 3, we again find the perturbed leaves $F^A$ for $T^A$ by using Eq.~\ref{faultymatrix} and then calculate the percentage of perturbed leaves $\vartheta^A$ based on the process discussed in Section~\ref{dtrs}. If $\vartheta^A$ is less than or equal to $\theta$, we consider that $T^A$ is repairable. We then repair $T^A$ by using the procedure RepairForest() that returns updated active forest ($T^{A\prime}$) and updated leaf statistics ($L^{A\prime}$) as output.  

In Step 4, we update the window of batches $D^W$. Let $w=|D^W|$ be the number of batches that are currently stored in $D^W$. If $w$ is greater than or equal to $\lambda$, we first remove the oldest batch from $D^W$ and the add $D^B$ to $D^W$. In Step 5, we build $T^T$, $T_k^T$,$k=1,2,\ldots,M$ (if $T^T$ is null) by applying RF on $D^B$. Similar to Step 3, in Step 6 we repair $T^T$ if it is repairable. We find the perturbed leaves $F^T$ of $T^T$ and then calculate the percentage of perturbed leaves $\vartheta^T$. If $\vartheta^T$ is less than or equal to $\theta$, we consider that $T^T$ is repairable. We then repair $T^T$ by using procedure RepairForest() that returns updated temporary forest ($T^{T\prime}$) and updated leaf statistics ($L^{T\prime}$) as output. Finally, in step 7, Algorithm~\ref{algo_rfil} returns all updated classifiers, statistics and parameters.

\IncMargin{0.5em}
\begin{algorithm}
\SetKwBlock{StepZero}{Initialize:}{end}
\SetKwBlock{StepOne}{Step 1:}{end}
\SetKwBlock{StepTwo}{Step 2:}{end}
\SetKwBlock{StepThree}{Step 3:}{end}
\SetKwBlock{StepFour}{Step 4:}{end}
\SetKwBlock{StepFive}{Step 5:}{end}
\SetKwBlock{StepSix}{Step 6:}{end}
\SetKwBlock{StepSeven}{Step 7:}{end}

\SetKwInOut{Input}{Input}\SetKwInOut{Output}{Output}
{\scriptsize 
\Input{Batch data set $D^B$, Permanent Forest (PF) $T^P$, Active Forest (AF) $T^A$, Temporary Forest (TF) $T^R$, Statistics of PF leaves $L^P$, Statistics of AF leaves $L^A$, Statistics of TF leaves $L^T$, Reserve window $D^W$, Concept Drift Flag ($cdf$) Concept Drift Threshold ($\lambda$), Repairable Threshold ($\theta$), Error Tolerance Threshold ($\epsilon$) and Reserve Window Size ($\gamma$).
}
\Output{Updated $T^{P^\prime}$, $T^{A^\prime}$, $T^{T^\prime}$, $L^{P^\prime}$, $L^{A^\prime}$, $L^{T^\prime}$, $D^{W^\prime}$, and $cdf$.}
\BlankLine
\DontPrintSemicolon

\StepOne{
	\If {$T^P$ is $null$}{
Apply an existing decision forest algorithm such as RF~\cite{breiman2001random} on $D^B$ to build PF, $T_m^P$,$m=1,2,\ldots,M$; /* PF is built with M trees */\;
Set 	$T^A \leftarrow T^P$;\;	       
$Goto~Step~7$;\;	
}
 
}

\StepTwo{ 
Find perturbed leaves, $F^P \leftarrow FindPerturbedLeaves(D^B,T^P,\epsilon,L^P)$;  /* Find perturbed leaves in $T^P$ */\;
	$T^{P^\prime}, L^{P^\prime} \leftarrow RepairForest(T^P,D^B,L^P,F^P,\theta)$;\;
}

\StepThree{
Find perturbed leaves, $F^A \leftarrow FindPerturbedLeaves(D^B,T^A,\epsilon,L^A)$;  /* Find faulty leaves in $T^A$ */\;
Calculate, $\vartheta^A\leftarrow CalculatePerturbedLeavesRatio(F^A)$;  /* Calculate the percentage of faulty leaves in $T^A$ */\;
\If{$\vartheta^A \leq \theta$}{ 
$T^{A^\prime},L^{A^\prime}  \leftarrow RepairForest(T^A,D^B,L^A,F^A,\theta)$;\;
$Goto~Step~7$;
}
}
\StepFour{	
	\If{$|D^W|\geq\gamma$}{
			Find the oldest batch in $D^W$ and set to $D^O$;\;
			$D^W \leftarrow D^W \cap D^O$;\;
}
$D^{W^\prime} \leftarrow D^W \cup D^B$;\;
$cdf \leftarrow cdf+1$;\;
}
\StepFive{
\If{$T^T$ is $null$}{
Apply RF on $D^{W^\prime}$ to build TF, $T_m^T$, $m=1,2,\ldots,M$;\;
$Goto~Step~7$;	
}
}
\StepSix{
Find perturbed leaves, $F^T \leftarrow FindPerturbedLeaves(D^B,T^T,\epsilon,L^T)$;  /* Find faulty leaves in $T^R$ */\;
Calculate, $\vartheta^R \leftarrow CalculatePerturbedLeavesRatio(F^T)$;  /* Calculate the percentage of perturbed leaves in $T^T$ */\;
 \If{$\vartheta^T \leq \theta$}{ 
 $T^{T^\prime}, L^{T^\prime} \leftarrow RepairForest(T^T,D^B,L^T,F^T,\theta)$;\;
}
\Else{
 Apply RF on $D^{W^\prime}$ to build TF, $T_m^T$, $m=1,2,..,M$;\; 
\If{$cdf>\lambda$}{
		$T^{A^\prime} \leftarrow T^{T^\prime}$; $T^{T^\prime} \leftarrow \emptyset$; $cdf \leftarrow 0$; $L^{A^\prime} \leftarrow L^{T^\prime}$;\;
	}
}
}
\StepSeven{
	Return $T^{P^\prime}$, $T^{A^\prime}$, $T^{T^\prime}$, $L^{P^\prime}$, $L^{A^\prime}$, $L^{T^\prime}$, $D^{W^\prime}$ and $cdf$.
}
}
\caption{LearnADF()}\label{algo_rfil}
\end{algorithm}\DecMargin{0.5em}

\IncMargin{0.5em}
\begin{algorithm}
\SetAlgoLined
\SetKwData{Left}{left}\SetKwData{This}{this}\SetKwData{Up}{up}

\SetKwBlock{StepOne}{Step 1:}{end}
\SetKwBlock{StepTwo}{Step 2:}{end}
\SetKwInOut{Input}{Input}\SetKwInOut{Output}{Output}
{\scriptsize 
\Input{Decision forest, $T$, Batch data set, $D^B$, Leaves statistics, $L$, Perturbed leaves $F$ and Repairable threshold $\theta$.
}
\Output{Updated decision forest $T^\prime$, updated leaves statistics, $L^\prime$.}
\BlankLine
\DontPrintSemicolon

\StepOne{
	Set $T^\prime \leftarrow \emptyset$;\;
	Set $L^\prime \leftarrow \emptyset$;\;
	\For{$k=1$ \KwTo $M$}{
		calculate, $\vartheta \leftarrow CalculatePerturbedLeavesRatio(F_k)$; /*Calculate the percentages of perturbed leaves of the $k^{th}$-tree, $T_k$.*/\;
		\If{$\vartheta > \theta$}{
			$T^\prime_k \leftarrow iSAT(T_k,D^B)$; /*Repair $T_k$ by using our proposed improved separating axis theorem (iSAT).*/ \;
			\For{each leaf, $l \in T^\prime_i$}{
			    Set $D^l \leftarrow  FindRecords(T_k,D^B,l)$; \;
			   \If{$isPure(D^l)==false \& isPerturbed(l)\& SizeOf(D^l)> MinLeafSize$}{
			Expand leaf $l$ using Entropy based splitting strategy;\;
		}
		}
		}
		calculate, $L^\prime_k \leftarrow UpdateLeafStatistics(T^\prime_k,D^B,L)$;\;
		$L^\prime \leftarrow L^\prime \cup L^\prime_k$
	    $T^\prime \leftarrow T^\prime \cup T^\prime_k$
	}
}
\StepTwo{ 
	Return $T^\prime$;
}
}
\caption{Procedure RepairForest()}
\label{algo_repairRF}
\end{algorithm}
\DecMargin{0.5em}

\IncMargin{0.5em}
\begin{algorithm}
\SetAlgoLined
\SetKwData{Left}{left}\SetKwData{This}{this}\SetKwData{Up}{up}

\SetKwBlock{StepOne}{Step 1:}{end}
\SetKwBlock{StepTwo}{Step 2:}{end}
\SetKwBlock{StepThree}{Step 3:}{end}
\SetKwInOut{Input}{Input}\SetKwInOut{Output}{Output}
{\scriptsize 
\Input{Decision tree, $t$, Batch data set, $D^B$}
\Output{Updated decision tree, $t^\prime$}
\BlankLine
\DontPrintSemicolon
\StepOne{
	set $LB^t \leftarrow min(t)$;/*Find lower boundary of $t$ for all numerical attributes. */\; 
	set $UB^t \leftarrow max(t)$; /*Find upper boundary of $t$ for all numerical attributes. */\; 
	set $LB^B \leftarrow min(D^B)$; /*Find lower boundary of $D^B$ for all numerical attributes. */\; 
	set $UB^B \leftarrow max(D^B)$; /*Find upper boundary of $D^B$ for all numerical attributes. */\; 
	set $Dist_1 \leftarrow LB^B - UB^t$; $Dist_2 \leftarrow LB^t - UB^B$; \;
	set $Max_1 \leftarrow max(Dist_1)$; $Max_2 \leftarrow max(Dist_2)$; \;
}
\StepTwo{
/*If there is no overlapping between $t$ and $D^B$. */\; 
\If{($Max_1 > 0 \& Max_1 \geq Max_2$) OR ($Max_2 > 0 \& Max_2 \geq Max_1$) }{
\If{$Max_1 > 0 \& Max_1 \geq Max_2$}{
        Set $splitAttr \leftarrow findAttributeWithMaxDistance(Dist_1,Max_1)$; \;
		set $SplitValue \leftarrow (LB^B[splitAttr]+UB^t[splitAttr])/2$;\;
		Create a node, $node \leftarrow CreateNode(splitAttr,SplitValue)$;\;
		Add $t$ as the left child of $t^\prime$ ;\;
		Add $node$ as the right child of $t^\prime$;\;
}
\ElseIf{$Max_2 > 0 \& Max_2 \geq Max_1$}{
        Set $splitAttr \leftarrow findAttributeWithMaxDistance(Dist_2,Max_2)$;\;
		set $SplitValue \leftarrow (LB^t[splitAttr]+UB^B[splitAttr])/2$;\;
		Create a node, $node \leftarrow CreateNode(splitAttr,SplitValue)$;\;
		Add $t$ as the right child of $t^\prime$ ;\;
		Add $node$ as the left child of $t^\prime$;\;
}
}
/*If there is overlapping exists between $t$ and $D^B$. */\; 
\Else{
	Update $Dist_1 \leftarrow UB^B - UB^t$;\; 
	Update $Max_1 \leftarrow max(Dist_1)$;\; 
	\If{$Max_1 > 0$}{
	    Set $splitAttr \leftarrow findAttributeWithMaxDistance(Dist_1,Max_1)$; \;
	    set $SplitValue \leftarrow UB^t[splitAttr]$;\;
		Create a node, $node \leftarrow CreateNode(splitAttr,SplitValue)$;\;
		Add $t$ as the left child of $t^\prime$ ;\;
		Add $node$ as the right child of $t^\prime$;\;
        }
    Update $Dist_2 \leftarrow LB^t - LB^B$;\;
	Update $Max_2 \leftarrow max(Dist_2)$; \;
	\If{$Max_2 > 0$}{
	    Set $splitAttr \leftarrow findAttributeWithMaxDistance(Dist_2,Max_2)$;\;
	    set $SplitValue \leftarrow LB^t[splitAttr]$;\;
		Create a node, $node \leftarrow CreateNode(splitAttr,SplitValue)$;\;
		Add $t$ as the right child of $t^\prime$ ;\;
		Add $node$ as the left child of $t^\prime$;\;	
        }
}

}

\StepThree{
Return updated decision tree, $t^\prime$;
}
}
\caption{Procedure iSAT()}\label{algo_iSAT}
\end{algorithm}\DecMargin{0.5em}

\subsection{Complexity Analysis of ADF}
\label{Complexity}

We now analyse the computational complexity of ADF. We consider a batch data set with $n$ records, and $m$ attributes. We also consider that the ensemble size is $M$ where a tree can have $l$ leaves. Although, ADF iteratively updates the trees based on each training batch data using Algorithm~\ref{algo_rfil}, we present a complexity analysis of the algorithm for an iteration as follows. 

In Step 1 of Algorithm~\ref{algo_rfil}, we build a decision forest by using an existing algorithm such as RF~\cite{breiman2001random} which has a complexity $O(Mnm^2)$~\cite{SuZhang2006}. In Step 2, we find the number of perturbed leaves using Eq.~\ref{faultymatrix} which has a complexity $O(Mnml)$. In this step, we also repair the trees one by one by using the RepairForest procedure (see Algorithm~\ref{algo_repairRF}). For $M$ trees, the complexity of the RepairForest() is $O(M(nm+\theta nm^2))\approx O(Mnm^2)$. Thus, the complexity of Step 2 of Algorithm~\ref{algo_rfil} is $O(Mnml+Mnm^2)\approx O(Mnm^2)$ for a small $l$. Step 3 of the algorithm has the same complexity.

In Step 4 of Algorithm~\ref{algo_rfil}, we update the window of batches that has a complexity $O(\gamma{n})$. Similar to Step 1, Step 5 has a complexity $O(M{n}m^2)$ for a small $\gamma$. In Step 6, we either repair trees or build new trees. The complexity of these operations is $O(Mnm^2)$. Thus, the overall complexity of Algorithm~\ref{algo_rfil} is $O(Mnm^2)$. This variant of ADF is known as ADF-R. 

ADF can also build a decision forest by using a decision tree algorithm such as the Hoeffding Tree (HT) algorithm~\cite{domingos2000mining}. The complexity of the HT algorithm is $O(nmlcv)$~\cite{gomes2017adaptive}, where $c$ is the number of class values and $v$ is the maximum domain size of an attribute. Thus, for $M$ trees, ADF requires a complexity $O(Mnmlcv)$. This variant of ADF is known as ADF-H. Typically, $c$, $v$ and $l$ values are very small, especially compared to $n$. Therefore, the complexity of ADF-H is $O(Mnm)$. We present the complexities of ADF variants, CIRF, RF and ARF in Table~\ref{tab:Complexity_analysis}.

 \begin{table}[ht!]
	\small
	\centering
	\caption{Complexity analysis.}
	\renewcommand\tabcolsep{3pt} 
	\begin{tabular}{lr|lr}
	\toprule
		Proposed Method&Computational complexity&Existing Method&Computational complexity\\
		\midrule
		  ADF-R&$O(Mnm^2)$&RF&$O(Mnm^2)$~\cite{SuZhang2006}\\
		  &&CIRF&$O(Mnm^2)$~\cite{hu2018novel}\\
		  ADF-H&$O(Mnm)$&ARF&$O(Mnm(log(m)))$~\cite{gomes2017adaptive}\\
		\bottomrule
	\end{tabular}
	\label{tab:Complexity_analysis}
\end{table}

\section{Experimental Results and Discussion}
\label{Experimental_Result}

We carry out a set of experiments to evaluate our proposed framework ADF. We compare the performance of ADF with eight state-of-the-art machine learning and incremental learning techniques, comprising two non-incremental forest algorithms (namely RF~\cite{breiman2001random} and SysFor~\cite{islam2011knowledge}), two incremental tree algorithms (namely Hoeffding Tree (HT)~\cite{domingos2000mining} and Hoeffding Adaptive Tree (HAT)~\cite{bifet2009adaptive} and four incremental forest algorithms (namely LeveragingBag~\cite{bifet2010leveraging}, OzaBag~\cite{oza2005online}, CIRF~\cite{hu2018novel} and ARF~\cite{gomes2017adaptive}).  

We implement ADF in the Java programming language using the Massive Online Analysis (MOA) API~\cite{MOA_API}. The source code of ADF is available at GitHub (\url{https://github.com/grahman20/ADF}). We test the ADF framework by using one of the three techniques: RF~\cite{breiman2001random}, HT~\cite{domingos2000mining} and SysFor~\cite{islam2011knowledge}, and thereby obtain three variants called ADF-R, ADF-H and ADF-S, respectively. 
We also implement an existing technique called CIRF~\cite{hu2018novel}. For implementing the RF and SysFor, we use the Java code from the Weka platform~\cite{witten2016data}. All other methods are already available in the MOA framework. We use the default settings of the methods while running the experiment. 

\subsection{Data Sets}
\label{Data_Sets}

We apply the incremental learning techniques on five real data sets and one synthetic data set, as shown in Table~\ref{tab:Datasets}. The real data sets are available publicly in the UCI Repository~\cite{Frank+Asuncion:2010}. 

\begin{table}[ht!]
	\small
	\centering
	\caption{Data sets at a glance.}
	\renewcommand\tabcolsep{3pt} 
	\begin{tabular}{lrcccl}
	\toprule
		Data set&Records& Attributes&Number of class values & Classification accuracy & Data source\\
		\midrule
		  LDPA&164,860&6&11&56\%&UCI\\
		  UIFWA&149,332&5&22&40\%&UCI\\
			EB&45,781&5&31&66\%&UCI\\
			AReM&42,239&7&7&63\%&UCI\\
			Avila&20,867&11&12&51\%&UCI\\
			House&300,000&10&7&80\%&Synthetic\\
		\bottomrule
	\end{tabular}
	\label{tab:Datasets}
\end{table}

\subsection{Simulation of Training and Test Batch Data Sets}
\label{Experimental_batch_creation}

For each data set, we artificially create 34 training and test batch data sets in which we implement the five scenarios, namely SKC, MKC, SUC, MUC and MKUC as discussed in Section~\ref{Our_Framework} and presented in Eq.~\eqref{class_scenarios} and in Fig.~\ref{fig:framework}. The process of simulating scenarios and batches is summarized in Table~\ref{tab:simulation_batch_data}.

We first divide a data set into two sub data sets, where each sub data set contains approximately half of the total records of the data set and approximately half of the class values. From the first sub data set, we create 3 batches (having the numbers from 9 to 11) that follow the SKC scenario in which each batch contains records having a single class value. Similarly, from the second sub data set, we create 3 batches (having the numbers from 12 to 14) which follow the SUC scenario as shown in Table~\ref{tab:simulation_batch_data}.

We then build two decision trees: $T_1$ and $T_2$ by applying SysFor~\cite{islam2011knowledge} on the first and second sub data sets, respectfully. Using $T_1$ and $T_2$, we then create the remaining 28 batches where the source of records of each batch is described in the third column of Table~\ref{tab:simulation_batch_data}. For each batch, we create training and test batch data sets, where the test batch contains 20\% of records in which 10\% of records are taken from the current batch and the remaining 10\% of records are taken from the previous two batches. Note that all records in the training and test batches are chosen randomly. 
 
\begin{table}[ht!]
	\small
	\centering
	\caption{Simulation of batch data sets.}
	\renewcommand\tabcolsep{3pt} 
	\begin{tabular}{lcl}
	\toprule
		Scenario&Batches& Source of records\\
		\midrule
		  MKC-1&1-4&
			\begin{tabular}{@{}l@{}}25\% of records are taken from the largest two leaves of $T_1$ and \\ 75\% of records are taken from the remaining leaves of $T_1$
			\end{tabular}
			 \\
			\cline{2-3}
		  MKC-2&5-8&
			\begin{tabular}{@{}l@{}}75\% of records are taken from the largest two leaves of $T_1$ and \\ 25\% of records are taken from the remaining leaves of $T_1$
			\end{tabular}
			 \\			
			\midrule
			SKC&9-11& Each batch contains records (from the first sub data set) having a single class value\\
		  \midrule
			SUC&12-14& Each batch contains records (from the second sub data set) having a single class value\\
			\midrule
		  MUC-1&15-18&
			\begin{tabular}{@{}l@{}}25\% of records are taken from the largest two leaves of $T_2$ and \\ 75\% of records are taken from the remaining leaves of $T_2$
			\end{tabular}
			 \\
			\cline{2-3}
		  MUC-1&19-22&
			\begin{tabular}{@{}l@{}}75\% of records are taken from the largest two leaves of $T_2$ and \\ 25\% of records are taken from the remaining leaves of $T_2$
			\end{tabular}
			 \\
			\midrule
		  MKUC-1&23-26&75\% of records are taken from $T_1$ and 25\% of records are taken from $T_2$\\
		  MKUC-2&27-30&25\% of records are taken from $T_1$ and 75\% of records are taken from $T_2$\\
			MKUC-3&31-34&50\% of records are taken from $T_1$ and 50\% of records are taken from $T_2$\\			
	\bottomrule
	\end{tabular}
	\label{tab:simulation_batch_data}
\end{table}

\subsection{Experimental Settings}
\label{Experimental_settings}

In our experiment, we use a number of parameters for ADF variants and existing techniques. Most of the techniques use two common parameters, namely ensemble size and minimum leaf size (or grace period). The ensemble size is set to 10 and the minimum leaf size for large data sets (i.e. the size of the data set is greater than 100000) is set to 100; otherwise it is set to 20. For ADF, we set the default values for $\lambda$, $\theta$, $\epsilon$, and $\gamma$ at 3,0.4, 0.02, and 3, respectively. Moreover, we use the majority voting~\cite{jamali2011majority} method to calculate the final output of a decision forest.

\subsection{Detail Experimental Results and Performance Analysis}
\label{Experimental_detail_results}

We present the performances of ADF variants (ADF-H, ADF-R and ADF-S) and eight existing techniques on 34 batches that are created from the LDPA data set in Table~\ref{tab:LDPA_details_accuracy}. In the case of incremental methods, a classifier is built by applying a method on the first batch training data and the classifier is repaired and updated incrementally for the remaining batches. However, for non-incremental methods, we rebuild the classifier for each batch training data. For both types of methods, we obtain a classier for batch training data ${D^{t^i}_{B_{train}}}$. We then use the classifier to classify batch test data $D^{t^i}_{B_{test}}$ and obtain classification accuracy. Bold values in the table indicate the best results. Out of 34 batches, ADF-S performs the best in 25 batches.

For each method, we also calculate the average classification accuracy which is shown in the last row of Table~\ref{tab:LDPA_details_accuracy}. The average classification accuracies of ADF-R and ADF-S are 0.640 and 0.682, respectively. From the experimental results, it is clear that ADF-R and ADF-S outperform other methods on the LDPA data set.

\begin{table*}[!ht]
\tiny
	\centering
	\caption{Classification performances of ADF variants and existing methods on 34 batches of LDPA data set.}
	\renewcommand\arraystretch{1.2} 
	\renewcommand\tabcolsep{0pt} 
	\begin{tabular*}{\textwidth}{@{\extracolsep{\fill}}lcccccccccccccc}
		\toprule
		\multirow{2}*{Scenario} & \multirow{2}*{Batch No.} & \multicolumn{2}{c}{Batch Size} & \multicolumn{2}{c}{Non-incremental Forest}& \multicolumn{2}{c}{Incremental Tree}& \multicolumn{7}{c}{Incremental Forest}\\  
	\cline{3-4}\cline{5-6}\cline{7-8}\cline{9-15}
		& &Training Data&Test Data&SysFor&RF&HT&HAT&LeveragingBag&OzaBag&CIRF&ARF&ADF-H&ADF-R&ADF-S\\
		\midrule
\multirow{4}*{MKC-1}&1&2613&803&\textbf{0.707}&0.690&0.247&0.247&0.247&0.247&\textbf{0.707}&0.262&0.690&0.690&\textbf{0.707}\\
&2&3216&802&\textbf{0.766}&0.717&0.069&0.069&0.070&0.069&0.743&0.080&0.721&0.722&0.749\\
&3&3205&801&\textbf{0.758}&0.730&0.213&0.705&0.664&0.642&0.742&0.690&0.720&0.727&0.740\\
&4&3194&801&\textbf{0.782}&0.754&0.738&0.738&0.705&0.705&0.744&0.729&0.738&0.742&0.770\\
	\cline{1-15}
\multirow{4}*{MKC-2}&5&3391&845&\textbf{0.783}&0.773&0.749&0.749&0.737&0.753&0.756&0.753&0.749&0.748&0.750\\
&6&3391&845&\textbf{0.807}&0.806&0.263&0.175&0.174&0.392&0.793&0.180&0.775&0.776&0.786\\
&7&3360&845&\textbf{0.793}&0.786&0.199&0.176&0.180&0.299&0.780&0.196&0.759&0.761&0.759\\
&8&3329&845&\textbf{0.808}&0.792&0.683&0.142&0.062&0.657&0.798&0.063&0.765&0.765&0.774\\
	\cline{1-15}
\multirow{3}*{SKC}&9&4001&968&0.885&0.885&0.885&0.885&0.885&0.861&0.882&0.885&0.885&\textbf{0.886}&0.883\\
&10&4001&968&0.551&0.551&0.551&0.551&0.551&0.550&0.614&0.551&0.476&0.699&\textbf{0.745}\\
&11&1898&484&0.500&0.500&0.500&0.500&0.500&0.502&0.310&0.500&0.625&0.635&\textbf{0.644}\\
	\cline{1-15}
\multirow{3}*{SUC}&12&4080&968&0.500&0.500&0.500&0.500&0.500&0.500&0.210&0.500&0.458&0.600&\textbf{0.655}\\
&13&296&169&0.503&0.503&0.503&0.503&0.503&0.503&0.271&0.503&0.541&0.611&\textbf{0.612}\\
&14&3904&968&0.500&0.500&0.500&0.500&0.500&0.500&0.276&0.500&\textbf{0.692}&0.590&0.590\\
	\cline{1-15}
\multirow{4}*{MUC-1}&15&3685&920&0.450&0.340&0.180&0.251&0.274&0.180&0.213&0.302&0.398&0.416&\textbf{0.489}\\
&16&3685&920&0.549&0.505&0.495&0.297&0.297&0.490&0.340&0.272&0.495&0.641&\textbf{0.660}\\
&17&3682&920&0.572&0.502&0.067&0.096&0.105&0.072&0.404&0.097&0.386&0.514&\textbf{0.638}\\
&18&3679&920&0.524&0.501&0.316&0.172&0.150&0.316&0.395&0.155&0.368&0.517&\textbf{0.629}\\
	\cline{1-15}
\multirow{4}*{MUC-2}&19&3746&933&0.660&0.548&0.403&0.251&0.170&0.403&0.467&0.195&0.432&0.592&\textbf{0.702}\\
&20&3746&933&0.662&0.606&0.027&0.098&0.099&0.027&0.525&0.107&0.414&0.599&\textbf{0.712}\\
&21&3685&933&0.663&0.581&0.323&0.297&0.273&0.323&0.531&0.287&0.402&0.640&\textbf{0.735}\\
&22&3624&933&0.554&0.544&0.066&0.194&0.136&0.066&0.466&0.134&0.407&0.583&\textbf{0.680}\\
	\cline{1-15}
\multirow{4}*{MKUC-1}&23&4714&1177&0.466&0.371&0.295&0.295&0.302&0.295&0.453&0.299&0.394&0.604&\textbf{0.641}\\
&24&4714&1177&0.556&0.472&0.427&0.427&0.130&0.427&0.452&0.134&0.469&0.637&\textbf{0.659}\\
&25&4714&1177&0.520&0.577&0.104&0.104&0.110&0.104&0.500&0.098&0.579&0.663&\textbf{0.693}\\
&26&4714&1177&0.605&0.578&0.557&0.557&0.524&0.557&0.488&0.555&0.590&0.664&\textbf{0.692}\\
	\cline{1-15}
\multirow{4}*{MKUC-2}&27&4704&1176&0.554&0.418&0.177&0.177&0.253&0.177&0.508&0.221&0.413&0.603&\textbf{0.671}\\
&28&4704&1176&0.561&0.429&0.026&0.026&0.103&0.026&0.530&0.131&0.420&0.599&\textbf{0.655}\\
&29&4691&1176&0.558&0.442&0.016&0.110&0.119&0.016&0.431&0.102&0.329&0.557&\textbf{0.625}\\
&30&4683&1176&0.554&0.389&0.168&0.201&0.159&0.168&0.500&0.143&0.317&0.566&\textbf{0.607}\\
	\cline{1-15}
\multirow{4}*{MKUC-3}&31&4920&1229&0.579&0.407&0.234&0.150&0.136&0.234&0.530&0.138&0.362&0.599&\textbf{0.632}\\
&32&4811&1209&0.560&0.392&0.285&0.037&0.070&0.285&0.526&0.074&0.383&0.596&\textbf{0.634}\\
&33&5338&1256&0.569&0.450&0.171&0.160&0.104&0.171&0.522&0.102&0.410&0.629&\textbf{0.651}\\
&34&5814&1292&0.557&0.484&0.106&0.135&0.098&0.106&0.480&0.101&0.385&0.594&\textbf{0.629}\\
	\cline{1-15}
	\multicolumn{4}{c}{Average}&0.615&0.559&0.325&0.308&0.291&0.342&0.526&0.295&0.528&0.640&\textbf{0.682}\\
		\bottomrule
	\end{tabular*}
	\label{tab:LDPA_details_accuracy}
\end{table*}

Similar to LDPA, for the remaining five data sets ADF variants achieve higher classification accuracies than other existing techniques. Due to space limitation, we present the average classification accuracies (see Table~\ref{tab:overall_avg_accuracy}) and execution times (see Table~\ref{tab:overall_avg_time}) of the ADF variants and eight existing techniques on six data sets. Bold values in the tables indicate the best results. From Table~\ref{tab:overall_avg_accuracy}, we can see that ADF-R and ADF-S outperform the other techniques in terms of classification accuracy for all data sets. However, while ADF variants perform better than other techniques, the variants take slightly more time than the ARF, CIRF, RF and SysFor. As accurate classification is the primary goal in real applications, for handling dynamic changes in both class and data distribution, ADF-R can be recommended over the eight state-of-the-art techniques by considering the classification accuracy and execution time.

\begin{table*}[!ht]
\tiny
	\centering
	\caption{Overall classification accuracy of ADF variants and other existing methods on all data sets.}
	\renewcommand\arraystretch{1.2} 
	\renewcommand\tabcolsep{0pt} 
	\begin{tabular*}{\textwidth}{@{\extracolsep{\fill}}lccccccccccc}
		\toprule
		\multirow{2}*{Data set} & \multicolumn{2}{c}{Non-incremental Forest}& \multicolumn{2}{c}{Incremental Tree}& \multicolumn{7}{c}{Incremental Forest}\\  
	\cline{2-3}\cline{4-5}\cline{6-12}
		&SysFor&RF&HT&HAT&LeveragingBag&OzaBag&CIRF&ARF&ADF-H&ADF-R&ADF-S\\
		\midrule
		LDPA&0.615&0.559&0.325&0.308&0.291&0.342&0.526&0.295&0.528&0.640&\textbf{0.682}\\
UIFWA&0.484&0.504&0.186&0.191&0.195&0.182&0.393&0.205&0.416&\textbf{0.591}&0.550\\
EB&0.627&0.619&0.407&0.315&0.343&0.409&0.629&0.516&0.626&\textbf{0.722}&0.705\\
AReM&0.714&0.729&0.327&0.311&0.285&0.346&0.650&0.289&0.522&\textbf{0.838}&0.818\\
Avila&0.685&0.719&0.382&0.376&0.328&0.395&0.643&0.350&0.582&0.788&\textbf{0.825}\\
House&0.801&0.802&0.390&0.339&0.354&0.402&0.720&0.351&0.853&0.860&\textbf{0.867}\\
		\bottomrule
	\end{tabular*}
	\label{tab:overall_avg_accuracy}
\end{table*}

\begin{table*}[!ht]
\tiny
	\centering
	\caption{Overall execution time (ms) of ADF variants and other existing methods on all data sets.}
	\renewcommand\arraystretch{1.2} 
	\renewcommand\tabcolsep{0pt} 
	\begin{tabular*}{\textwidth}{@{\extracolsep{\fill}}lccccccccccc}
		\toprule
		\multirow{2}*{Data set} & \multicolumn{2}{c}{Non-incremental Forest}& \multicolumn{2}{c}{Incremental Tree}& \multicolumn{7}{c}{Incremental Forest}\\  
	\cline{2-3}\cline{4-5}\cline{6-12}
		&SysFor&RF&HT&HAT&LeveragingBag&OzaBag&CIRF&ARF&ADF-H&ADF-R&ADF-S\\
		\midrule
LDPA&2651&1791&\textbf{20}&30&281&103&2062&938&1415&2901&3589\\
UIFWA&1591&1584&\textbf{70}&185&369&657&1785&1149&1226&3287&3528\\
EB&511&500&\textbf{44}&45&102&82&444&710&960&1357&1426\\
AReM&790&427&\textbf{62}&103&363&137&929&477&654&959&1011\\
Avila&288&222&\textbf{28}&37&157&70&808&424&506&643&917\\
House&2673&2829&\textbf{27}&65&350&167&2292&1211&2135&3184&3395\\
		\bottomrule
	\end{tabular*}
	\label{tab:overall_avg_time}
\end{table*}

To demonstrate the effectiveness of ADF framework, we also carry out experimentation by rearranging the scenarios of the batches on LDPA and UIFWA. In the new arrangement, we have 28 batches where the training and test batches are created randomly. Table~\ref{tab:LDPA_details_accuracy_batchrearranged} presents the details about the rearrangement of scenarios, and classification accuracies of ADF variants and existing techniques on the LDPA data set. Bold values in the table indicate that ADF-S performs the best in all batches.

\begin{table*}[!ht]
\tiny
	\centering
	\caption{Accuracies of ADF variants and existing methods by rearranging the scenarios of the batches on LDPA data set.}
	\renewcommand\arraystretch{1.2} 
	\renewcommand\tabcolsep{0pt} 
	\begin{tabular*}{\textwidth}{@{\extracolsep{\fill}}lcccccccccccccc}
		\toprule
		\multirow{2}*{Scenario} & \multirow{2}*{Batch No.} & \multicolumn{2}{c}{Batch Size} & \multicolumn{2}{c}{Non-incremental Forest}& \multicolumn{2}{c}{Incremental Tree}& \multicolumn{7}{c}{Incremental Forest}\\  
	\cline{3-4}\cline{5-6}\cline{7-8}\cline{9-15}
		& &Training Data&Test Data&SysFor&RF&HT&HAT&LeveragingBag&OzaBag&CIRF&ARF&ADF-H&ADF-R&ADF-S\\
		\midrule
		\multirow{4}*{MKC-1}&1&2981&917&\textbf{0.662}&0.614&0.325&0.325&0.320&0.310&\textbf{0.662}&0.323&0.578&0.614&\textbf{0.662}\\
&2&3669&916&\textbf{0.689}&0.624&0.107&0.107&0.180&0.218&0.664&0.175&0.592&0.652&\textbf{0.689}\\
&3&3641&916&0.663&0.634&0.162&0.097&0.121&0.154&0.669&0.119&0.615&0.652&\textbf{0.691}\\
&4&3613&916&0.638&0.549&0.393&0.540&0.512&0.400&0.632&0.550&0.541&0.607&\textbf{0.662}\\
\cline{1-15}
\multirow{4}*{MKC-2}&5&4130&1029&0.739&0.704&0.335&0.227&0.230&0.333&0.731&0.225&0.702&0.725&\textbf{0.758}\\
&6&4130&1029&0.765&0.758&0.732&0.756&0.756&0.703&0.781&0.750&0.756&0.765&\textbf{0.786}\\
&7&4047&1029&0.794&0.777&0.511&0.052&0.056&0.511&0.811&0.055&0.777&0.791&\textbf{0.814}\\
&8&3964&1029&0.794&0.779&0.751&0.779&0.779&0.724&0.796&0.779&0.780&0.787&\textbf{0.810}\\
\cline{1-15}
\multirow{4}*{MKUC-1}&9&5457&1361&0.700&0.670&0.654&0.654&0.647&0.642&0.691&0.630&0.656&0.695&\textbf{0.724}\\
&10&5457&1361&0.652&0.604&0.079&0.061&0.062&0.063&0.622&0.065&0.588&0.646&\textbf{0.660}\\
&11&5457&1361&0.616&0.555&0.151&0.096&0.096&0.140&0.552&0.086&0.531&0.589&\textbf{0.644}\\
&12&5457&1361&0.531&0.588&0.042&0.191&0.181&0.150&0.597&0.182&0.553&0.620&\textbf{0.652}\\
\cline{1-15}
\multirow{4}*{MKUC-2}&13&5456&1361&0.526&0.482&0.172&0.159&0.080&0.133&0.491&0.097&0.357&0.478&\textbf{0.589}\\
&14&5456&1361&0.582&0.445&0.258&0.202&0.217&0.247&0.453&0.238&0.266&0.525&\textbf{0.600}\\
&15&5534&1361&0.580&0.519&0.155&0.244&0.055&0.149&0.425&0.265&0.285&0.525&\textbf{0.609}\\
&16&5612&1361&0.589&0.506&0.179&0.245&0.051&0.195&0.405&0.259&0.300&0.544&\textbf{0.598}\\
\cline{1-15}
\multirow{4}*{MUC-1}&17&4207&1049&0.557&0.459&0.214&0.046&0.079&0.187&0.410&0.269&0.341&0.582&\textbf{0.605}\\
&18&4207&1049&0.594&0.470&0.331&0.097&0.152&0.291&0.396&0.131&0.380&0.583&\textbf{0.605}\\
&19&4205&1049&0.520&0.501&0.362&0.238&0.310&0.308&0.418&0.384&0.425&0.609&\textbf{0.619}\\
&20&4203&1049&0.575&0.447&0.417&0.421&0.409&0.416&0.404&0.407&0.410&0.624&\textbf{0.628}\\
\cline{1-15}
\multirow{4}*{MUC-2}&21&4230&1056&0.631&0.618&0.342&0.303&0.086&0.331&0.517&0.080&0.398&0.634&\textbf{0.667}\\
&22&4230&1056&0.696&0.599&0.277&0.325&0.147&0.270&0.564&0.149&0.402&0.666&\textbf{0.726}\\
&23&4136&1056&0.632&0.563&0.334&0.186&0.252&0.363&0.578&0.272&0.386&0.677&\textbf{0.728}\\
&24&4047&1056&0.635&0.656&0.349&0.194&0.325&0.323&0.580&0.281&0.366&0.671&\textbf{0.728}\\
\cline{1-15}
\multirow{4}*{MKUC-3}&25&5740&1433&0.581&0.479&0.266&0.278&0.227&0.262&0.518&0.207&0.293&0.617&\textbf{0.639}\\
&26&5618&1413&0.580&0.402&0.233&0.246&0.137&0.238&0.524&0.118&0.346&0.620&\textbf{0.640}\\
&27&6242&1469&0.543&0.447&0.280&0.212&0.170&0.251&0.500&0.191&0.372&0.611&\textbf{0.630}\\
&28&6817&1513&0.538&0.484&0.231&0.295&0.264&0.293&0.499&0.249&0.398&0.601&\textbf{0.606}\\
	\cline{1-15}
	\multicolumn{4}{c}{Average}&0.629&0.569&0.309&0.271&0.246&0.307&0.568&0.269&0.478&0.633&\textbf{0.670}\\
		\bottomrule
	\end{tabular*}
	\label{tab:LDPA_details_accuracy_batchrearranged}
\end{table*}

By rearranging the scenarios, we present the overall accuracies of ADF variants and other existing methods on the LDPA and UIFWA data sets in Table~\ref{tab:overall_avg_accuracy_batchrearranged}. It is clear that ADF-R and ADF-S also outperform other techniques in terms of classification accuracy after rearranging the scenarios of the batches.

\begin{table*}[!ht]
\tiny
	\centering
	\caption{Overall accuracies of ADF variants and other methods on LDPA and UIFWA data sets by rearranging the scenarios.}
	\renewcommand\arraystretch{1.2} 
	\renewcommand\tabcolsep{0pt} 
	\begin{tabular*}{\textwidth}{@{\extracolsep{\fill}}lccccccccccc}
		\toprule
		\multirow{2}*{Data set} & \multicolumn{2}{c}{Non-incremental Forest}& \multicolumn{2}{c}{Incremental Tree}& \multicolumn{7}{c}{Incremental Forest}\\  
	\cline{2-3}\cline{4-5}\cline{6-12}
		&SysFor&RF&HT&HAT&LeveragingBag&OzaBag&CIRF&ARF&ADF-H&ADF-R&ADF-S\\
		\midrule
LDPA&0.629&0.569&0.309&0.271&0.246&0.307&0.568&0.269&0.478&0.633&\textbf{0.670}\\
UIFWA&0.493&0.518&0.114&0.092&0.112&0.119&0.409&0.117&0.413&\textbf{0.565}&0.540\\
		\bottomrule
	\end{tabular*}
	\label{tab:overall_avg_accuracy_batchrearranged}
\end{table*}

\subsection{Statistical Analysis of the Experimental Results}
\label{Experimental_stat_analysis}

We now analyze the results by using a statistical non-parametric sign test~\cite{mason1994statistics} and Nemenyi~\cite{demvsar2006statistical} test for all 34 batches to evaluate the statistical significance of the superiority of ADF variants over the existing methods. 

We carry out a sign test on the results of ADF-S with other methods (one by one) at the right-tailed by considering significance level $\alpha=0.025$ (i.e. 97.5\% significance level) as shown in Fig.~\ref{fig:sign_all_adf_s}. In the figure, each bar represents the z-value of the comparison between ADF-S and an existing method, whereas the line represents the z-ref value (which is obtained from a table~\cite{mason1994statistics}). The sign test results (see Fig.~\ref{fig:sign_all_adf_s}) indicate that ADF-S performs significantly better than the other methods (at $z>1.96$, $p<0.025$) on all data sets. The cases where the performance of ADF-S is not significantly superior are marked with down arrow signs on top of the bars. We also get similar results for ADF-R. 

\begin{figure}[ht!]
\centering
  \setlength{\belowcaptionskip}{0pt}
	\setlength{\abovecaptionskip}{0pt}	
	        \includegraphics[width=1\linewidth]{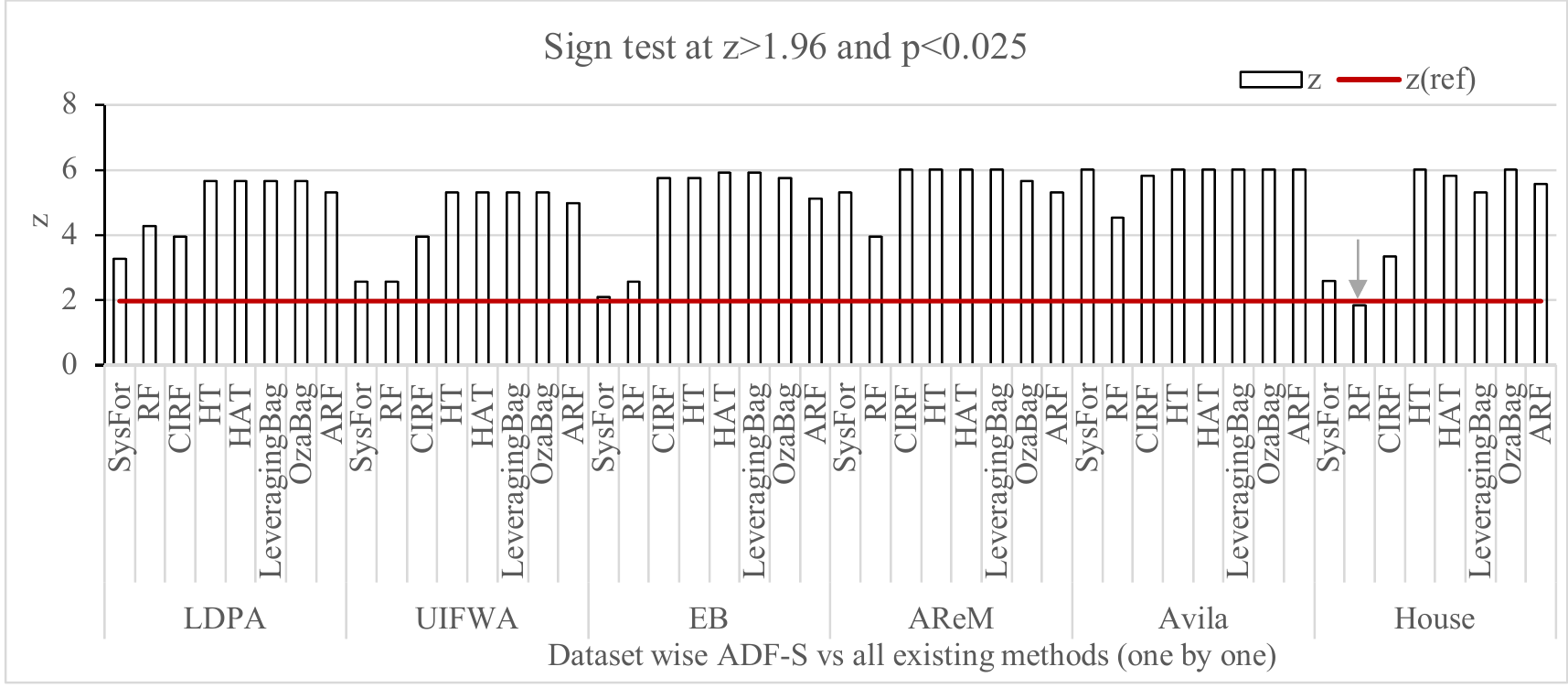}
	    \caption{Statistical analysis based on sign test on all datasets.}
	    \label{fig:sign_all_adf_s}
\end{figure}

We also carry out the Nemenyi test on the results of ADF-S with other methods (one by one) at the right-tailed by considering significance level $\alpha=0.025$ (i.e. 97.5\% significance level). The Nemenyi test results indicate that both ADF-S and ADF-R perform significantly better than the other techniques (at $p<0.025$) on all data sets.

\subsection{Experimentation in Handling Big data using ADF}
\label{Experimental_bigdata}

ADF can handle big data where the data can be divided into batches. For this task, we consider two big data sets, namely LDPA and UIFWA (see Table~\ref{tab:Datasets}). 

For the LDPA data set, we create a training data set having 80\% of the records that are selected randomly, and a test data set having the remaining records. To calculate the accuracy of the benchmark method such as Sysfor~\cite{islam2011knowledge}, we build a decision tree by applying Sysfor~\cite{islam2011knowledge} on the training data set and the tree is used to classify the test records. The accuracy of the benchmark method on the LDPA data set is 65.86\% as shown in Table~\ref{tab:expbigdata}. Besides, for ADF, we create 34 equal batch data sets where the records are chosen randomly from the training data set of the LDPA data set. We build a decision tree by applying Sysfor on Batch 1. The tree is then updated incrementally for the remaining 33 batches. The final tree is used to classify the test records. The classification accuracy of ADF on the LDPA data set is 61.88\% (see  Table~\ref{tab:expbigdata}). We also calculate the accuracy of CIRF on the LDPA data set, which is 53.77\%. 

Similarly, for the UIFWA data set, the accuracies of the benchmark method, CIRF and ADF are 45.90\%, 35.77\% and 40.91\%, respectively, as reported in Table~\ref{tab:expbigdata}. For both data sets, the performance of ADF is better than CIRF and is closer to the benchmark method. The experimental results on two data sets indicate the effectiveness of ADF over CIRF for handling big data.

\begin{table}[ht!]
	\small
	\centering
	\caption{Classification accuracy of ADF, CRIF and Benchmark methods on two big data sets.}
	\renewcommand\tabcolsep{3pt} 
	\begin{tabular}{lccc|lccc}
	\toprule
		Data set&Benchmark (SysFor)& CIRF & ADF&Data set&Benchmark (SysFor)& CIRF & ADF\\
		\midrule
		LDPA&65.86\%&53.77\%&61.88\%&UIFWA&45.09\%&35.77\%&40.91\%\\
	\bottomrule
	\end{tabular}
	\label{tab:expbigdata}
\end{table}

\section{Conclusion and future work}
\label{Conclusion}

This paper introduced ADF, an incremental machine learning framework, which produces a decision forest to classify new data. ADF can learn from new data that arrive as batches over time and the data can have new classes. As accurate classification is the primary goal in real applications, we argue that an incremental decision forest algorithm can achieve high accuracy based on three factors: identification of the best split, determination of trees that need to be repaired, and identification and management of concept drift. Based on our two novel theorems (see Theorem~\ref{subbox_nonoverlap_theorem} and Theorem~\ref{subbox_overlap_theorem}), we introduce a novel splitting strategy called iSAT (see Section~\ref{isat}) which can find the best split for new batch data. We also introduce a repairable strategy (see section~\ref{dtrs}) to find trees that need to be repaired. Moreover, we build a set of forests (see Section~\ref{justify_pf_af_tf}) to identify and handle concept drift and preserve previously acquired knowledge.

The effectiveness of ADF is also reflected in the experimental results. In the ADF framework, we build decision forests by using one of the three techniques: RF~\cite{breiman2001random}, HT~\cite{domingos2000mining} and SysFor~\cite{islam2011knowledge}, and thereby obtain three variants called ADF-R, ADF-H and ADF-S, respectively. We evaluate ADF variants on five publicly available natural data sets~\cite{Frank+Asuncion:2010} and one synthetic data set by comparing its performance with the performance of eight state-of-the-art techniques including HT~\cite{domingos2000mining}, CIRF~\cite{hu2018novel}, and ARF~\cite{gomes2017adaptive}. From Table~\ref{tab:overall_avg_accuracy}, we can see that in all data sets ADF-R and ADF-S outperform the other techniques in terms of classification accuracy, while the variants require a comparable execution time. Statistical sign test and Nemenyi test results (see Fig.~\ref{fig:sign_all_adf_s}) indicate that ADF-R and ADF-S perform significantly better than the other methods, except one case, at $z>1.96$, $p<0.025$ on all data sets.

Our initial experimentation on two data sets (see Table~\ref{tab:expbigdata}) indicates that ADF is also applicable to big data applications where the data can be divided into batches. In future work, we plan to explore the applicability of ADF for the non-dividable big data applications. 

{\footnotesize
\bibliography{gea_pd_ref}

\begin{thebibliography}{10}
\expandafter\ifx\csname url\endcsname\relax
  \def\url#1{\texttt{#1}}\fi
\expandafter\ifx\csname urlprefix\endcsname\relax\def\urlprefix{URL }\fi
\expandafter\ifx\csname href\endcsname\relax
  \def\href#1#2{#2} \def\path#1{#1}\fi

\bibitem{gama2014survey}
J.~Gama, I.~{\v{Z}}liobait{\.e}, A.~Bifet, M.~Pechenizkiy, A.~Bouchachia, A
  survey on concept drift adaptation, ACM computing surveys (CSUR) 46~(4)
  (2014) 44.

\bibitem{breiman2001random}
L.~Breiman, Random forests, Machine learning 45~(1) (2001) 5--32.

\bibitem{hu2018novel}
C.~Hu, Y.~Chen, L.~Hu, X.~Peng, A novel random forests based class incremental
  learning method for activity recognition, Pattern Recognition 78 (2018)
  277--290.

\bibitem{hu2018feature}
C.~Hu, Y.~Chen, X.~Peng, H.~Yu, C.~Gao, L.~Hu, A novel feature incremental
  learning method for sensor-based activity recognition, IEEE Transactions on
  Knowledge and Data Engineering 31~(6) (2018) 1038--1050.

\bibitem{ristin2015incremental}
M.~Ristin, M.~Guillaumin, J.~Gall, L.~Van~Gool, Incremental learning of random
  forests for large-scale image classification, IEEE transactions on pattern
  analysis and machine intelligence 38~(3) (2015) 490--503.

\bibitem{mensink2013distance}
T.~Mensink, J.~Verbeek, F.~Perronnin, G.~Csurka, Distance-based image
  classification: Generalizing to new classes at near-zero cost, IEEE
  transactions on pattern analysis and machine intelligence 35~(11) (2013)
  2624--2637.

\bibitem{gomes2017adaptive}
H.~M. Gomes, A.~Bifet, J.~Read, J.~P. Barddal, F.~Enembreck, B.~Pfharinger,
  G.~Holmes, T.~Abdessalem, Adaptive random forests for evolving data stream
  classification, Machine Learning 106~(9-10) (2017) 1469--1495.

\bibitem{oza2005online}
N.~C. Oza, Online bagging and boosting, in: 2005 IEEE international conference
  on systems, man and cybernetics, Vol.~3, Ieee, 2005, pp. 2340--2345.

\bibitem{ristin2014incremental}
M.~Ristin, M.~Guillaumin, J.~Gall, L.~Van~Gool, Incremental learning of ncm
  forests for large-scale image classification, in: Proceedings of the IEEE
  conference on computer vision and pattern recognition, 2014, pp. 3654--3661.

\bibitem{arvo1990transforming}
J.~Arvo, Transforming axis-aligned bounding boxes, in: Graphics gems, Academic
  Press Professional, Inc., 1990, pp. 548--550.

\bibitem{gottschalk1996separating}
S.~Gottschalk, Separating axis theorem, Tech. rep., Technical Report TR96-024,
  Department of Computer Science, UNC Chapel Hill (1996).

\bibitem{domingos2000mining}
P.~Domingos, G.~Hulten, Mining high-speed data streams, in: Proceedings of the
  sixth ACM SIGKDD international conference on Knowledge discovery and data
  mining, 2000, pp. 71--80.

\bibitem{islam2011knowledge}
M.~Z. Islam, H.~Giggins, Knowledge discovery through sysfor: a systematically
  developed forest of multiple decision trees, in: Proceedings of the Ninth
  Australasian Data Mining Conference-Volume 121, Australian Computer Society,
  Inc., 2011, pp. 195--204.

\bibitem{Frank+Asuncion:2010}
A.~Frank, A.~Asuncion, {UCI} machine learning repository, accessed August 25,
  2020 (2010).

\bibitem{liu2008incremental}
X.~Liu, G.~Zhang, Y.~Zhan, E.~Zhu, An incremental feature learning algorithm
  based on least square support vector machine, in: International Workshop on
  Frontiers in Algorithmics, Springer, 2008, pp. 330--338.

\bibitem{suykens1999least}
J.~A. Suykens, J.~Vandewalle, Least squares support vector machine classifiers,
  Neural processing letters 9~(3) (1999) 293--300.

\bibitem{he2011incremental}
H.~He, S.~Chen, K.~Li, X.~Xu, Incremental learning from stream data, IEEE
  Transactions on Neural Networks 22~(12) (2011) 1901--1914.

\bibitem{quinlan1986induction}
J.~R. Quinlan, Induction of decision trees, Machine learning 1~(1) (1986)
  81--106.

\bibitem{breiman1984classification}
L.~Breiman, J.~Friedman, R.~Olshen, C.~Stone, Classification and regression
  trees--crc press, Boca Raton, Florida.

\bibitem{boyd2004convex}
S.~Boyd, L.~Vandenberghe, Convex optimization, Cambridge university press,
  2004.

\bibitem{SuZhang2006}
J.~Su, H.~Zhang, A fast decision tree learning algorithm, in: Proceedings of
  the National Conference on Artificial Intelligence, Vol.~21, Menlo Park, CA;
  Cambridge, MA; London; AAAI Press; MIT Press; 1999, 2006, p. 500.

\bibitem{bifet2009adaptive}
A.~Bifet, R.~Gavald{\`a}, Adaptive learning from evolving data streams, in:
  International Symposium on Intelligent Data Analysis, Springer, 2009, pp.
  249--260.

\bibitem{bifet2010leveraging}
A.~Bifet, G.~Holmes, B.~Pfahringer, Leveraging bagging for evolving data
  streams, in: Joint European conference on machine learning and knowledge
  discovery in databases, Springer, 2010, pp. 135--150.

\bibitem{MOA_API}
A.~Bifet, G.~Holmes, R.~Kirkby, B.~Pfahringer, {MOA:} massive online analysis,
  J. Mach. Learn. Res. 11 (2010) 1601--1604.

\bibitem{witten2016data}
E.~Frank, M.~A. Hall, I.~H. Witten, The weka workbench. online appendix for
  "data mining: Practical machine learning tools and techniques" (2016).

\bibitem{jamali2011majority}
N.~Jamali, C.~Sammut, Majority voting: Material classification by tactile
  sensing using surface texture, IEEE Transactions on Robotics 27~(3) (2011)
  508--521.

\bibitem{mason1994statistics}
R.~D. Mason, D.~A. Lind, W.~G. Marchal, Statistics: an introduction, Duxbury
  Press, 1994.

\bibitem{demvsar2006statistical}
J.~Dem{\v{s}}ar, Statistical comparisons of classifiers over multiple data
  sets, Journal of Machine learning research 7~(Jan) (2006) 1--30.

\end{thebibliography}

}
\end{document}